\documentclass{article}

    \PassOptionsToPackage{numbers, compress}{natbib}




    \usepackage[final]{neurips_2022}

\usepackage[utf8]{inputenc} 
\usepackage[T1]{fontenc}    
\usepackage{hyperref}       
\usepackage{url}            
\usepackage{booktabs}       
\usepackage{amsfonts}       
\usepackage{nicefrac}       
\usepackage{microtype}      
\usepackage{xcolor}         
\usepackage{multirow}

\usepackage{longtable}
\usepackage{tablefootnote}

\usepackage{amsfonts}
\usepackage{amsmath}
\usepackage{amsthm}
\usepackage{graphicx}
\usepackage{mathrsfs}   
\usepackage{mathtools}  
\usepackage{bbm}
\usepackage{algorithm}  
\usepackage[noend]{algorithmic}
\usepackage{subcaption}
\usepackage{tikz}           
\usetikzlibrary{automata, arrows, arrows.meta, positioning}
\usepackage{enumitem}
\usepackage{arydshln}

\usepackage{dblfloatfix}

\usepackage{hyperref}

\newcommand{\Reals}{\mathbb{R}}
\newcommand{\Naturals}{\mathbb{N}}
\renewcommand{\S}{\mathcal{S}}
\newcommand{\A}{\mathcal{A}}

\renewcommand{\r}{R}

\newcommand{\val}{\mathcal{V}}  

\newcommand{\birl}{\texttt{BIRL}}

\newcommand{\Sim}{\Trans} 
\newcommand{\Q}{{Q}}    
\newcommand{\E}{\mathbb{E}}
\newcommand{\truereward}{\boldsymbol{R}}
\newcommand{\trans}{T}
\newcommand{\basetrans}{{\trans_{\texttt{base}}}}    
\newcommand{\Trans}{\mathcal{T}}    
\newcommand{\Transdemo}{\Trans_{\texttt{demo}}}
\newcommand{\Transtest}{\Trans_{\texttt{test}}}
\newcommand{\Param}{\Theta}

\renewcommand{\P}{\mathbb{P}}
\newcommand{\empP}{{\smash{\hat \P}}}

\newcommand{\f}{\mathbf{f}}     
\newcommand{\alltraj}{TBD}  

\newcommand{\bel}{\P}
\newcommand{\data}{\mathcal{D}} 

\newcommand{\bestR}{\bar{R}}

\DeclareMathOperator{\bayesregret}{BR}

\newcommand{\argmax}{\mathop{\rm arg\,max}}

\newcommand{\supp}{\mathop{\rm supp}}
\newcommand{\rhodemo}{\rho_{\texttt{demo}}}
\newcommand{\rhotest}{\rho_{\texttt{test}}}

\newcommand{\edbirl}{\texttt{ED-BIRL}} 
\newcommand{\drbirl}{\texttt{DR-BIRL}}

\newcommand{\edairl}{\texttt{ED-AIRL}}
\newcommand{\drairl}{\texttt{DR-AIRL}}
\newcommand{\airl}{\texttt{AIRL}}
\newcommand{\airlme}{\texttt{AIRL-ME}}
\newcommand{\bc}{\texttt{BC}}
\newcommand{\rime}{\texttt{RIME}}
\newcommand{\gail}{\texttt{GAIL}}

\theoremstyle{definition}

\usepackage{array, booktabs}
\newcommand{\PreserveBackslash}[1]{\let\temp=\\#1\let\\=\temp}
\newcolumntype{C}[1]{>{\PreserveBackslash\centering}p{#1}}
\newcolumntype{R}[1]{>{\PreserveBackslash\raggedleft}p{#1}}
\newcolumntype{L}[1]{>{\PreserveBackslash\raggedright}p{#1}}


\makeatletter

\makeatother
  

\title{
Environment Design for \\
Inverse Reinforcement Learning
}

%

\author{%
  Thomas Kleine Buening \\
  University of Oslo
  \And
  Victor Villin \\
  University of Neuch{a}tel 
  \And
  Christos Dimitrakakis \\
  University of Neuch{a}tel 
}

\begin{document}

\maketitle

\begin{abstract}
    
    Learning a reward function from demonstrations suffers from low
    sample-efficiency. Even with
    abundant data, current inverse reinforcement learning methods that focus on learning from a single
    environment can fail to handle slight changes in the environment
    dynamics. We tackle these challenges through adaptive environment
    design. In our framework, the learner repeatedly interacts with
    the expert, with the former selecting environments to identify the reward function as quickly as possible from the expert's demonstrations in said environments. This results in
    improvements in both sample-efficiency and robustness, as we show
    experimentally, for both exact and approximate inference.
    
    
    %
    
\end{abstract}

\section{Introduction}



\begin{figure*}[t]
     \centering
         \begin{subfigure}[b]{0.32\textwidth}
         \centering
         \includegraphics[height=0.43\textwidth]{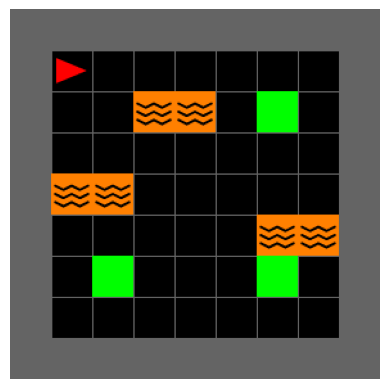} 
         \includegraphics[height=0.43\textwidth]{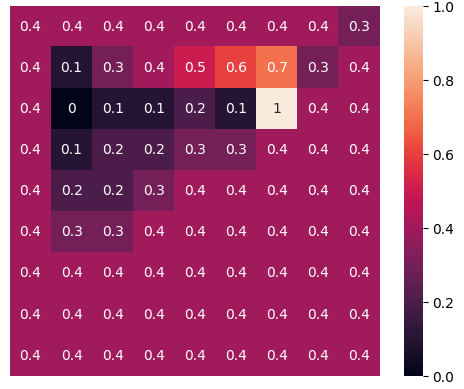}
         \caption{1st round}
     \end{subfigure}
      \hfill
    \begin{subfigure}[b]{0.32\textwidth}
         \centering
         \includegraphics[height=0.43\textwidth]{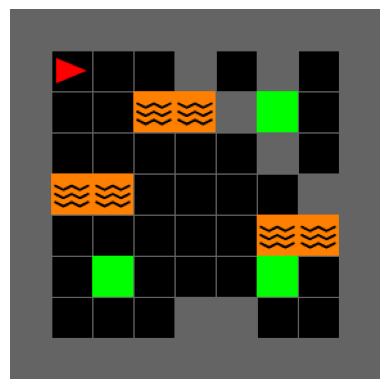}
         \includegraphics[height=0.43\textwidth]{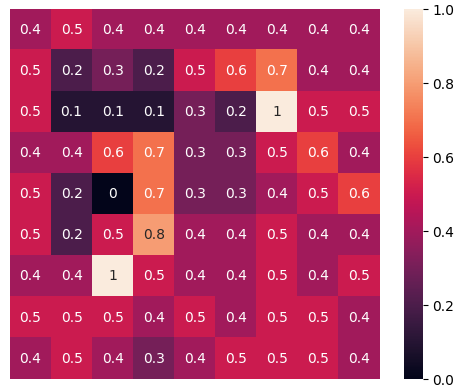}
         \caption{2nd round}
     \end{subfigure}
     \hfill
    \begin{subfigure}[b]{0.32\textwidth}
         \centering
         \includegraphics[height=0.43\textwidth]{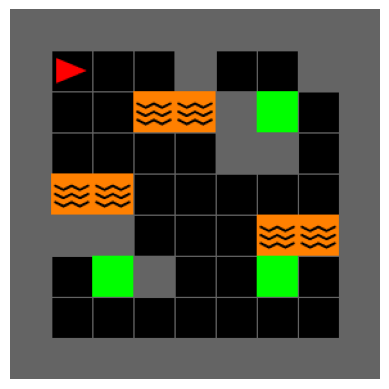}
         \includegraphics[height=0.43\textwidth]{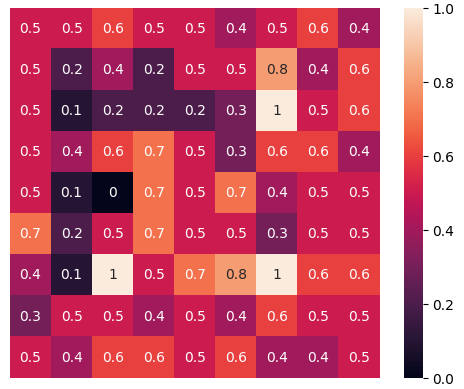}
         \caption{3rd round}
     \end{subfigure}
     \caption{The expert navigates to the closest of the three possible goal squares while avoiding lava in adaptively elected maze environments. For three consecutive rounds (a)-(c), we display the mazes chosen by \texttt{ED-BIRL} (Algorithm~\ref{algorithm:ED-BIRL} in Section~\ref{section:irl}) as well as the current reward estimate after observing an expert trajectory in the current and past mazes. By adaptively designing environments and combining the expert demonstrations, we can recover the locations of all goal and most lava squares. In contrast, from observations in a fixed environment, e.g., repeatedly observing the expert in maze (a), it would be impossible to recover all relevant aspects of the reward function, i.e., the location of the goal squares, as only the nearest goal square would be visited by the expert (repeatedly). Observing the human expert in new and carefully curated environments can lead to a more precise and robust estimate of the unknown reward function. 
     }
     \label{fig:intro}
     \vspace{-0.2cm}
\end{figure*} 


Reinforcement Learning (RL) is a powerful framework for autonomous decision-making in games~\citep{mnih2015human}, continuous control problems~\citep{lillicrap2015continuous}, and robotics~\citep{levine2016end}. 
However, specifying suitable reward functions remains one of the main barriers to the wider application of RL in real-world settings and methods that allow us to communicate tasks without manually defining reward functions could be of great practical value. One such approach is
Inverse Reinforcement Learning (IRL), which aims to find a reward function that explains observed (human) behaviour \citep{russell1998learning, ng2000IRL}.


Much of recent effort in IRL has been devoted to making existing methods more sample-efficient as well as robust to changes in the environment dynamics \citep{arora2021survey}. Sample-efficiency is crucial, as data requires expensive human input. We also need robust estimates of the unknown reward function, so that the resulting policies remain near-optimal, even when the deployed environment dynamics differ from the ones we learned from.


However, recent work has found that IRL methods tend to heavily specialise (``overfit'') to the specific transition dynamics under which the demonstrations were provided, thereby failing to generalise even across minor changes in the environment~\citep{toyer2020magical}. More generally, even with unlimited access to expert demonstrations, we may still fail to learn suitable reward functions from a fixed environment. In particular, prior work has explored the identifiability problem in IRL \citep{cao2021identifiability, kim2021reward}, illustrating the inherent limitations of IRL when learning from expert demonstrations in a single, fixed environment.

In our study, we consider the situation where we can \emph{design} a sequence of \emph{environments}, in which the expert will demonstrate the task. This can either mean slightly modifying a base environment, or selecting an environment from a finite set. Crafting new environments can involve simple adjustments such as relocating objects or adding obstacles, which can be done with little effort and cost. In open-world settings, simply having different task conditions (e.g., different cars, locations, or time-of-day in a vehicle scenario) amounts to a different environment.  


\vspace{-0.2cm}
\paragraph{Contributions.}
We propose algorithms for designing environments in order to infer human goals from demonstrations. This requires two key components: firstly, an \emph{environment design} algorithm  and secondly, an \emph{inference} algorithm for data from multiple environment dynamics. Our hypothesis is that intelligent environment design can significantly improve both sample-efficiency of IRL methods and the robustness of learned rewards against variations in the environment dynamics. 
An example where our approach is applicable is given in Figure~\ref{fig:intro}, where we need to learn the reward function (i.e., the location of the goal and lava squares). 
In summary, our contributions are:
\begin{enumerate}[itemsep=0pt, topsep=0pt, leftmargin=12pt]
    \item An environment design framework that selects informative demo environments for the experts  (Section~\ref{section:problem_formulation}).
    \item An objective based on maximin Bayesian regret to choose environments in a way that compels the expert to provide useful information about the unknown reward function (Section~\ref{section:environment_design}). 
    \item An extension of Bayesian IRL and Maximum Entropy IRL to multiple environments (Section~\ref{section:irl}). We provide concrete implementations for the extensions of MCMC Bayesian IRL~\cite{ramachandran2007BIRL} and Adversarial IRL ($\airl$)~\cite{fu2017learning}. 
    \item We conduct extensive experiments to evaluate our approaches (Section~\ref{section:experiments}). 
    We test learned reward functions in unknown transition dynamics across various environments, including continuous mazes and MuJoCo benchmarks~\cite{todorov2012mujoco}. We compare against several other IRL and imitation learning algorithms, such as Robust Imitation learning with Multiple perturbed Environments ($\rime$)~\cite{chae2022robust}.
    \item Our results illustrate the superior robustness of our algorithms and the effectiveness of the environment design framework. This shows that active environment selection significantly improves both the sample-efficiency of IRL and the robustness of learned rewards (generalisability). 
\end{enumerate}

\section{Related Work}\label{section:related_work}

\paragraph{Inverse Reinforcement Learning.} 

The goal of IRL \citep{russell1998learning, ng2000IRL} is to find a reward function that explains observed behaviour, which is assumed to be approximately optimal. Two of the most popular approaches to the IRL problem are Bayesian IRL \citep{ramachandran2007BIRL, rothkopf2011preference, choi2011MAP-BIRL} and Maximum Entropy IRL \citep{ziebart2008maximum, ho2016generative, finn2016guided}.
In this work, we extend both IRL formulations to demonstrations under varying environment dynamics. 
Note that this differs from the situation, where we observe demonstrations by experts of varying quality \citep{castro2019inverse}, or demonstrations by experts that optimise different rewards \citep{ramponi2020truly, likmeta2021dealing}, in a fixed environment. 
Moreover, \citet{cao2021identifiability} and \citet{rolland2022identifiability} study the identifiability of the true reward function in IRL and showed that when observing experts under different environment dynamics the true reward function can be identified up to a constant under certain conditions. 
However, it is important to note that in all of these cases, the learner is passive and does not actively seek information about the reward function by choosing specific experts or environments.





\vspace{0.05cm}
\paragraph{Active Inverse Reinforcement Learning.} 
The environment design problem that we consider in this paper can be viewed as one of active reward elicitation \citep{lopes2009active}. Prior work on active reward learning has focused on querying the expert for additional demonstrations in specific states \citep{lopes2009active, brown2018risk, lindner2021information, lindner2022active}, mainly with the goal of resolving the uncertainty that is due to the expert's policy not being specified accurately in these states. 
In contrast, we consider the situation where we cannot directly query the expert for additional information in specific states, but instead sequentially choose environments for the expert to act in.
Importantly, this means that the same state can be visited under different transition dynamics, which can be crucial to distinguish the true reward function among multiple plausible candidates \citep{cao2021identifiability, rolland2022identifiability}. 

In other related work, \citet{amin2017repeated} consider a repeated IRL setting in which the learner can choose \emph{any} task for the expert to complete (with full information of the expert policy).  
\citet{he2021assisted} study an iterative reward design setup where a human provides the learner with a proxy reward function, upon which the learner tries to choose an edge-case environment in which the proxy fails so that the human revises their proxy. 
In a similar vein, \citet{buning2022interactive} introduced Interactive IRL, where the learner interacts with a human in a collaborative Stackelberg game without knowledge of the joint reward function. 
This setting is similar to the framework presented in this paper in that the leader in a Stackelberg game can be viewed as designing environments by committing to specific policies.

\paragraph{Environment Design for Reinforcement Learning.}
Environment design and curriculum learning for RL aim to design a sequence of environments with increasing difficulty to improve the training of an autonomous agent~\citep{narvekar2020curriculum}. However, in contrast to our problem setup, observations in generated training environments are cheap, since this only involves actions from an autonomous agent, not a human expert.
As such, approaches like domain randomisation~\citep{tobin2017domain, akkaya2019solving} can be practical for RL, whereas they can be extremely inefficient and wasteful in an IRL setting. 
Moreover, in IRL we typically work with a handful of rounds only, so that slowly improving the environment generation process over thousands of training episodes (i.e., rounds) is impractical~\citep{dennis2020emergent, gur2021environment}. 
As a result, most methods, which are viable for the RL, can be expected to be unsuitable for the IRL problem.  

\section{Problem Formulation}\label{section:problem_formulation}
We now formally introduce the Environment Design for Inverse Reinforcement Learning framework. A Markov Decision Process (MDP) is a tuple $(\S, \A, \trans, \truereward, \gamma, \omega)$, where $\S$ is a set of states, $\A$ is a set of actions, $\trans: \S \times \A \times \S \to [0,1]$ is a transition function, $\truereward: \S \to \Reals$ is a reward function, $\gamma$ a discount factor, and $\omega$ an initial state distribution. 
We assume that there is a set transition functions $\Trans$ from which $\trans$ can be selected. Similar models have been considered for the RL problem under the name of Underspecified MDPs \citep{dennis2020emergent} or Configurable MDPs~\citep{metelli2018configurable, ramponi2021learning}. 


We assume that the true reward function, denoted $\truereward$, is unknown to the learner and consider the situation where the learner gets to interact with the human expert in a sequence of $m$ rounds.\footnote{Typically, expert demonstrations are a limited resource as they involve expensive human input. We thus consider a limited budget of $m$ expert trajectories that the learner is able to obtain.} 
More precisely, every round $k \in [m]$, the learner gets to select a demo environment $\trans_k \in \Trans$ for which an expert trajectory $\tau_k$ is observed. Our objective is to adaptively select a sequence of demo environments $\trans_1, \dots, \trans_m$ so as to recover a robust estimate of the unknown reward function. 
We describe the general framework for this interaction between learner and human expert in Framework~\ref{algorithm:general_framework_ed}. To summarise, a problem-instance in our setting is given by $(\S, \A, \Trans, \truereward, \gamma, \omega, m)$, where $\Trans$ is a set of environments, $\truereward$ is the \emph{unknown} reward function, and $m$ the learner's budget. 


\begin{algorithm}[t]
\floatname{algorithm}{Framework}
\caption{Environment Design for IRL}
\label{algorithm:general_framework_ed}
\begin{algorithmic}[1]
\STATE \textbf{input\,} set of environments $\Trans$, resources $m\in \Naturals$ 
\FOR{k = 1, \dots, m} 
\STATE Choose an environment $\trans_k \in \Trans$ 
\label{alg_line:environment_selection}
\STATE Observe expert trajectory $\tau_k$ in environment $\trans_k$
\STATE Estimate rewards from observations up to round $k$ 
\ENDFOR
\end{algorithmic}
\end{algorithm}

From Framework~\ref{algorithm:general_framework_ed} we see that the Environment Design for IRL problem has two main ingredients: a)~choosing useful demo environments for the human to demonstrate the task in (Section~\ref{section:environment_design}), and b)~inferring the reward function from expert demonstrations in multiple environments (Section~\ref{section:irl}).

\subsection{Preliminaries and Notation}
Throughout the paper, $\r$ denotes a generic reward function, whereas $\truereward$ refers to the true (unknown) reward function. We let $\Pi$ denote a generic policy space. 
Now, $\val_{\r, \trans}^\pi (s) \coloneqq \E [ \sum_{t=0}^\infty \gamma^t \r(s_t) \mid \pi, \trans, s_0 = s]$ is the expected discounted return, i.e., value function, of a policy $\pi$ under some reward function $\r$ and transition function $\trans$ in state $s$. For the value under the initial state distribution~$\omega$, we then merely write ${\val_{\r, \trans}^\pi \coloneqq \E_{s\sim \omega}[\val_{\r,\trans}^\pi (s)]}$ and denote its maximum by $\val_{\r, \trans}^* \coloneqq \max_{\pi} \val_{\r, \trans}^\pi$. We accordingly refer to the $\Q$-values under a policy $\pi$ by $\Q_{\r, \trans}^\pi(s, a)$ and their optimal values by $\Q_{\r, \trans}^*(s, a)$. 
In the following, we let $\pi^*_{\r, \trans}$ always denote the \emph{optimal policy} w.r.t.\ $\r$ and $\trans$, i.e., the policy maximising the expected discounted return in the MDP $(\S, \A, \trans, \r, \gamma, \omega)$.

In the following, we let $\tau$  denote expert trajectories. Note that in Framework~\ref{algorithm:general_framework_ed} every such trajectory is generated w.r.t.\ some transition dynamics $\trans$. In round $k$, we thus observe $\data_k = (\tau_k, \trans_k)$, i.e., the expert trajectory $\tau_k$ in environment $\trans_k$. We then write $\data_{1:k} = (\data_1, \dots, \data_k)$ for all observations up to (and including) the $k$-th round. Moreover, we let $\bel( \cdot \mid \data_{1:k})$ denote the posterior over reward functions given observations $\data_{1:k}$. For the prior $\P(\cdot)$, we introduce the convention that $\P(\cdot) = \P(\cdot \mid \data_{1:0})$.  

\section{Environment Design via Maximin Regret}\label{section:environment_design}
Our goal is to adaptively select demo environments for the expert based on our current belief about the reward function. 

In Section~\ref{subsection:minimax_br}, we introduce a maximin Bayesian regret objective for the environment design process which aims to select demo environments so as to ensure that our reward estimate is robust. 
Section~\ref{subsection:environment_generation} then deals with the selection of such environments when the set of environments exhibits a useful decomposable structure. We additionally provide a way to approximate the process when the set has an arbitrary structure or is challenging to construct.


\subsection{Maximin Bayesian Regret}\label{subsection:minimax_br}
We begin by reflecting on the potential \emph{loss} of an agent when deploying a policy $\pi$ under transition function $\trans$ and the \emph{true} reward function $\truereward$, given by the difference
\begin{equation*}
    \ell_{\truereward}(\trans, \pi) \coloneqq \val^*_{\truereward, \trans}  - \val_{\truereward, \trans}^{\pi}. \vspace{0.2cm}
\end{equation*}

The reward function $\truereward$ is unknown to us so that we can instead use our belief $\bel$ over reward functions\footnote{When we do not have a posterior over rewards, it is still possible to build a pseudo-belief upon point estimates. This approach is later explained in Section~\ref{subsection:maxent_irl}.}
and consider the \emph{Bayesian regret} (i.e., loss) of a policy $\pi$ under $\trans$ and $\bel$ given by
\begin{align*}
    \bayesregret_\bel (\trans, \pi) \coloneqq \E_{\r \sim \bel} \big[ \ell_\r (T, \pi)\big]. 
\end{align*} 
The concept of Bayesian regret is well-known from, e.g., online optimisation and online learning \cite{russo2014learning} and has been utilised for IRL in a slightly different form by~\citet{brown2018risk}. The idea is that given a (prior) belief about some parameter, we evaluate our policy against an oracle that knows the true parameter. 
Typically, under such uncertainty about the true parameter (in our case, reward function) we are interested in policies minimising the Bayesian regret: 
$$\min_{\pi \in \Pi} \bayesregret_\bel (\trans, \pi).$$ To derive an objective for the environment design problem, we then consider the maximin problem given by the worst-case environment $\trans$ for our current belief over reward functions $\bel$:
\footnote{We consider $\max_\trans \min_\pi$ and not the reverse, as we are interested in the maximin environment (and not minimax policy).}
\begin{equation}\label{eq:maximin_BR_objective}
    \max_{\trans \in \Trans} \min_{\pi \in \Pi} \bayesregret_\bel (\trans, \pi).
\end{equation}

What this means is that we search for an environment $\trans \in \Trans$ such that the regret-minimising policy w.r.t.\ $\bel$ performs the worst compared to the optimal policies w.r.t.\ the reward candidates $\r \sim \bel$. In other words, the maximin environment $\trans$ from \eqref{eq:maximin_BR_objective} can be viewed as the environment in which we expect our current reward estimate to perform the worst.  

Choosing environments for the expert according to \eqref{eq:maximin_BR_objective} also has the advantage that maximin environments are in general solvable for the expert, since the regret in degenerate or purely adversarial environments will be close to zero. Moreover, the regret objective is performance-based and not only uncertainty-based, such as entropy-based objectives \citep{lopes2009active}). This is typically desired as reducing our uncertainty about the rewards in states that are not relevant under any transition function in $\Trans$ (e.g., states that are not being visited by any optimal policy) is unnecessary and generally a wasteful use of our budget. 
Finally, we also see that if the Bayesian regret objective becomes zero, the posterior mean is guaranteed to be optimal in every demo environment. 

\begin{lemma}\label{lem:maximin_zero}
    If for some posterior $\bel(\cdot \mid \data)$ we have $\max_{\trans\in \Trans} \min_{\pi\in \Pi} \bayesregret_{\bel}(\trans, \pi) = 0$, then the posterior mean $\bar \r = \E_{\bel} [\r]$ is optimal for every $\trans \in \Trans$, i.e., $\bar \r$ induces an optimal policy in every environment contained in $\Trans$. 
\end{lemma}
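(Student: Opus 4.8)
The plan is to exploit two elementary structural facts. First, the pointwise regret $\ell_\r(\trans,\pi) = \val^*_{\r,\trans} - \val^\pi_{\r,\trans}$ is nonnegative for \emph{every} reward $\r$, since $\val^*_{\r,\trans} = \max_{\pi'} \val^{\pi'}_{\r,\trans}$. Second, for any fixed $\pi$ and $\trans$ the value $\val^\pi_{\r,\trans} = \E[\sum_t \gamma^t \r(s_t)\mid \pi,\trans]$ is \emph{linear} in $\r$, because it is an integral of $\r$ against the discounted state-occupancy measure induced by $\pi$ and $\trans$. The whole argument is organised around these two observations.

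First I would use nonnegativity to localise the hypothesis. Since $\ell_\r(\trans,\pi)\ge 0$ for all $\r$, the Bayesian regret $\bayesregret_\P(\trans,\pi) = \E_{\r\sim\P}[\ell_\r(\trans,\pi)]$ is nonnegative, and so is $\min_{\pi\in\Pi}\bayesregret_\P(\trans,\pi)$ for each $\trans$. A maximum of nonnegative quantities that equals zero forces every term to vanish, so the assumption $\max_{\trans\in\Trans}\min_{\pi\in\Pi}\bayesregret_\P(\trans,\pi)=0$ gives $\min_{\pi\in\Pi}\bayesregret_\P(\trans,\pi)=0$ for \emph{every} $\trans\in\Trans$ separately.

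Next comes the crux. By linearity of the value in the reward I can pull the posterior expectation inside and write $\E_{\r\sim\P}[\val^\pi_{\r,\trans}] = \val^\pi_{\bar\r,\trans}$ for $\bar\r = \E_\P[\r]$. Hence
\[
    \bayesregret_\P(\trans,\pi) = \E_{\r\sim\P}[\val^*_{\r,\trans}] - \val^\pi_{\bar\r,\trans},
\]
where the first term does not depend on $\pi$. Minimising the Bayesian regret over $\pi$ is therefore equivalent to \emph{maximising} $\val^\pi_{\bar\r,\trans}$, whose maximiser is exactly $\pi_{\bar\r,\trans}$, the optimal policy for the posterior mean. Thus $\min_\pi \bayesregret_\P(\trans,\pi)=0$ says precisely that $\pi_{\bar\r,\trans}$ attains zero Bayesian regret in $\trans$. (Note that the $\val^*_{\r,\trans}$ term is only convex, not linear, in $\r$, so it cannot be simplified the same way; fortunately it is policy-independent and drops out of the minimisation.)

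Finally I would convert zero Bayesian regret into optimality. Since $\E_{\r\sim\P}[\ell_\r(\trans,\pi_{\bar\r,\trans})]=0$ and the integrand is nonnegative, it must vanish $\P$-almost surely, i.e. $\val^{\pi_{\bar\r,\trans}}_{\r,\trans} = \val^*_{\r,\trans}$ for $\P$-almost every $\r$; in particular $\pi_{\bar\r,\trans}$ is optimal for the true reward, so $\bar\r$ induces an optimal policy in $\trans$, and since $\trans\in\Trans$ was arbitrary the claim follows. I expect the main obstacle to be stating the linearity step cleanly and being precise in this last step: the passage from a zero-mean nonnegative integrand to pointwise ($\P$-a.e.) optimality is where one implicitly relies on the true reward lying in the support of $\P$, and this is worth spelling out rather than glossing over.
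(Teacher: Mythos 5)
Your proof is correct and takes essentially the same route as the paper's own proof: linearity of the value function in the reward identifies $\pi_{\bar\r,\trans}$ as the Bayesian-regret minimiser, and nonnegativity of the pointwise regret converts the zero maximin into $\val^*_{\r,\trans} = \val^{\pi_{\bar\r,\trans}}_{\r,\trans}$ for ($\P$-almost) every $\r$ in the support, hence for the true reward, in every $\trans \in \Trans$. If anything, you are slightly more careful than the paper, which assumes a discrete posterior for simplicity and leaves the requirement $\truereward \in \supp(\P)$ implicit, whereas you flag it explicitly.
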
 

It is worth noting that our maximin Bayesian regret objective resembles several approaches to robust reinforcement learning, e.g., \cite{roy2017reinforcement, zhou2021finite, buening2023minimax, zhou2024natural}. However, it differs in that we are interested in the maximin environment (not minimax policy) and the Bayesian regret is defined w.r.t.\  a set of environments $\Trans$ \emph{and} a distribution over reward functions.

\subsection{Finding Maximin Environments}\label{subsection:environment_generation} 
\paragraph{Structured Environments.} 
Often the set of environments has a useful structure that can be exploited to search the space of environments efficiently. We here consider the special case where each environment $\trans \in \Trans$ is build from a collection of transition matrices $\trans_s$. Similar setups can be found in the robust dynamic programming literature (e.g., \cite{iyengar2005robust, nilim2005robust, xu2010distributionally, mannor2016robust}).   


Let $\trans_s \in \Reals^{\S \times A}$ denote a state-transition matrix dictating the transition probabilities in state $s$. 
We can identify any transition function $\trans$ with a family of state-transition matrices $\{\trans_s\}_{s\in \S}$. We then say that an environment set $\Trans$ allows us to make \emph{state-individual transition choices} if there exist sets $\Trans_s$ such that $\Trans = \{ \{ \trans_s\}_{s \in \S} \colon \trans_s \in \Trans_s\}$. In other words, we can choose a new environment $\trans$ by arbitrarily combining transition matrices for each state. Note that this of course allows for the case where the transitions in some state $s$ are fixed, i.e., $\Trans_s = \{\trans_s\}$. When we can make such state-individual transition choices, we can use an extended value iteration approach to approximate the maximin environment efficiently. The extended value iteration algorithm is specified in Appendix~\ref{appendix:algorithms}, Algorithm~\ref{algorithm:extended_VI}. 

\paragraph{Arbitrary Environments.} In some situations, the set of environments $\Trans$ may not exhibit any useful structure. Moreover, we may not even have explicit knowledge of the transition functions in $\Trans$, but can only access a set of corresponding simulators. In this case, we are left with approximating the maximin environment \eqref{eq:maximin_BR_objective} by sampling simulators from $\Trans$ and performing policy rollouts. We describe the complete procedure in Appendix~\ref{appendix:algorithms}, Algorithm~\ref{algorithm:brute_force_ed}.

\paragraph{Flexible Environment Set Construction.}

Although our assumption initially considers the simplest scenario where any environment within $\Trans$ is accessible at any time, this may become impractical when the process of building environments is labour-intensive. Instead of probing the entire set $\Trans$ at each environment design step, our framework allows some flexibility. With little approximation in Framework~\ref{algorithm:general_framework_ed} Line~\ref{alg_line:environment_selection}, we can select the next environment from a new subset $\Trans_\subset \subset \Trans$. This allows users to progressively build new environments with every additionally desired round.



\section{Inverse Reinforcement Learning with Multiple Environments}\label{section:irl}
We now explain how to learn about the reward function from demonstrations that were provided under multiple environment dynamics. To do so, we will extend Bayesian and MaxEnt IRL methods to this setting, and combine them with environment design to obtain the $\edbirl$ and $\edairl$ algorithm, respectively. While $\edbirl$ is designed for simple tabular problems due to its high complexity, $\edairl$ can be applied to environments with large or continuous action/observation spaces.

\begin{algorithm}[t]
\caption{\texttt{ED-BIRL}: Environment Design for \birl{}}
\label{algorithm:ED-BIRL}
\begin{algorithmic}[1]
\STATE \textbf{input\,} environments $\Trans$, prior $\P$, budget $m \in \mathbb{N}$
\FOR{$k = 1, \dots, m$}
\STATE Sample rewards from $\P(\cdot \mid \data_{1:{k-1}})$ using \birl{} 
\STATE Construct empirical distribution $\empP_{k-1}$ from samples\label{alg_line:emp_distr}
\STATE Find $\trans_k \in \argmax_{\trans} \min_\pi \bayesregret_{\empP_{k-1}}(\trans, \pi)$ 
\STATE Observe trajectory $\tau_{k}$ in $\trans_{k}$, i.e., $\data_k = (\tau_k, \trans_k)$
\ENDFOR
\RETURN $\texttt{BIRL}(\data_{1:m})$
\end{algorithmic}
\end{algorithm} 

\subsection{The Bayesian Setting: \edbirl{}} \label{subsection:birl}
The Bayesian perspective to the IRL problem provides a principled way to reason about reward uncertainty~\cite{ramachandran2007BIRL}.  Typically, the human is modelled by a Boltzmann-rational policy~\cite{jeon2020reward}. This means that for a given reward function $\r$ and transition function $\trans$ the expert is acting according to a softmax policy
$\pi^{\textrm{softmax}}_{\r, \trans} (a \mid s) = \frac{\exp (c \Q^*_{\r, \trans}(s, a))}{\sum_{a'} \exp(c \Q^*_{\r, \trans}(s, a'))}$,
where the parameter $c$ relates to our judgement of the expert's optimality.\footnote{Note that when using MCMC Bayesian IRL methods we can also perform inference over the parameter $c$ and must not assume knowledge of the expert's optimality.}
Given a prior distribution~$\P(\cdot)$, the goal of Bayesian IRL is to recover the posterior distribution $\P(\cdot \mid \data)$ and to either sample from the posterior using MCMC~\cite{ramachandran2007BIRL, rothkopf2011preference} or perform MAP estimation~\cite{choi2011MAP-BIRL}. 

In our case, the data is given by the sequence $\data_{1:k} = (\data_1, \dots, \data_k)$ with $\data_i = (\tau_i, \trans_i)$. We see that this is no obstacle as the likelihood factorises as 
$\P(\data_{1:k} \mid \r) = \prod_{i \leq k} \P( \tau_i \mid \r, \trans_i)$,
since the expert trajectories (i.e., expert policies) are conditionally independent given the reward function and transition function. The likelihood of each expert demonstration is then given by $\P(\tau_i \mid \r, \trans_i) = \prod_{(s,a)\in \tau_i} \pi^{\textrm{softmax}}_{\r, \trans_i}(a \mid s)$.
Hence, the reward posterior can be expressed as
\begin{equation}\label{eq:birl_posterior}
    \P(\r \mid \data_{1:k}) \propto \prod_{i \leq k}\prod_{(s,a)\in \tau_i} \pi^{\textrm{softmax}}_{\r, \trans_i}(a \mid s)  \cdot \P(\r).
\end{equation}
As a result, we can, for instance, sample from the posterior using the Policy-Walk algorithm from~\cite{ramachandran2007BIRL} with minor modifications or the Metropolis-Hastings Simplex-Walk algorithm from~\cite{buning2022interactive}. We generally denote any Bayesian IRL algorithm that is capable of sampling from the posterior by $\birl$. 

Plugging $\birl$ into our environment design framework, we get the $\edbirl$ procedure detailed in Algorithm~\ref{algorithm:ED-BIRL}. Note that, in practice, we approximate the posterior $\P(\cdot \mid \data_{1:k})$ by sampling rewards and constructing an empirical distribution~$\hat \P_{k}$.


\subsection{The MaxEnt Setting: \airlme{} and \edairl{}}\label{subsection:maxent_irl}

In the following, we describe how to extend MaxEnt IRL methods to demonstrations from multiple environments, and use them in combination with environment design for IRL. 









\begin{algorithm}[t]
\caption{$\edairl$: Environment Design for $\airl$}
\label{algorithm:ed-airl}
\begin{algorithmic}[1]
\STATE \textbf{input\,} environments $\Trans$, budget $m \in \mathbb{N}$ 
\FOR{$k = 1, \dots, m$}
\IF{$k = 1$}
\STATE Choose $\trans_k$ arbitrarily from $\Trans$
\ELSE
\STATE Let $\hat \P_{k-1} \equiv \mathcal{U}(\{\r_1, \dots, \r_{k-1}\})$
\STATE Find $\trans_{k} \in \argmax_{\trans}\bayesregret_{\empP_{k-1}}(\trans, \pi^*_{\r_{1:k-1}, \trans})$ \hfill \label{algorithm:ed-airl:environment_selection} 
\ENDIF 
\STATE Observe trajectory $\tau_k$ in $\trans$, i.e., $\data_k = (\tau_k, \trans_k)$
\STATE Compute point estimate $\r_k = \airl(\data_k)$
\STATE Compute best guess $\r_{1:k} = \airlme(\data_{1:{k}})$ 

\ENDFOR

\RETURN $\airlme(\data_{1:m})$
\end{algorithmic}
\end{algorithm}

\paragraph{Reward Inference with Multiple Environments.}
In MaxEnt IRL, the reward function is assumed to be parameterised by some parameter $\theta$. Given observations $\data_{1:k} = (\data_1, \dots, \data_k)$ with $\data_i = (\tau_i, \trans_i)$, our goal for multiple environments is to solve the maximum likelihood problem
\begin{align}\label{equation:maxent-objective}
    \argmax_\theta \sum_{(\tau, \trans) \in \data_{1:k}} \log \P(\tau \mid \theta, \trans),
\end{align}
where we again used that trajectories are independent conditional on the reward parameter $\theta$ and the dynamics $\trans$. 
Consequently, the only difference to the original MaxEnt IRL formulation is that we now sum over pairs $(\tau, \trans)$ instead of just trajectories $\tau$. 
The specific algorithm we consider here is Adversarial IRL ($\airl$) \citep{fu2017learning}, which 
frames the optimisation of \eqref{equation:maxent-objective} as a generative adversarial network. 

To extend $\airl$ to multiple environments, we can consider $k$ distinct policies $\pi_1, \dots, \pi_k$ used to generate trajectories in environments $\trans_1, \dots, \trans_k$, respectively, and discriminators $D_{1, \theta, {\phi_1}}, \dots, D_{k, \theta, {\phi_k}}$ given by 
$$D_{i, \theta, {\phi_i}}(s, a, s') =  \frac{\exp(f_{\theta, {\phi_i}}(s, s'))}{\exp(f_{\theta, {\phi_i}}(s, s')) + \pi_i (a\mid s)},$$
with $f_{\theta, {\phi_i}} (s , s') = g_\theta (s) + \gamma h_{\phi_i}(s') - h_{{\phi_i}}(s)$. 
Here, $g_\theta$ is the state-only reward approximator, and $h_{\phi_i}$ a shaping term specific to each environment $\trans_i$. We refer to this algorithm as \airlme{} and defer for a detailed description to Algorithm~\ref{algorithm:adv_maxent_irl} in Appendix~\ref{appendix:algorithms}. 

\paragraph{Environment Design for AIRL: \edairl{}.} 
This algorithm also relies on Bayesian regret, even though we do not have an analytical posterior as in the Bayesian setting. We substitute the posterior with a uniform distribution $\mathcal{U}(\{\r_1, \dots, \r_k\})$ over point estimates $\r_i$, each separately calculated with standard \airl{} from demonstration $\data_i$. 
We then also replace the posterior mean $\bar \r$ with the multi-environment estimate obtained from $\airlme{}(\data_{1:k})$, denoted $\r_{1:k}$.\footnote{Note that $\r_{1:k}$ is generally not equal to the mean of $\mathcal{U}(\{\r_1, \dots, \r_k\})$.} Here, $\r_{1:k}$ represents our best guess about the reward function after round $k$.  
In particular, for a given environment $\trans$, the optimal policy $\pi^*_{\r_{1:k}, \trans}$ w.r.t.\ $\r_{1:k}$ and $\trans$ is the policy we expect to perform best in environment $\trans$. In round $k+1$, we then choose the environment $T$ in which the reward function $\r_{1:k}$ performs the worst compared to the reward functions $\r_{1}, \dots, \r_k$. This environment is given by $\argmax_{\trans \in \Trans} \bayesregret_{\mathcal{U}(\{\r_1, \dots, \r_k\})} (\trans, \pi^*_{\r_{1:k}, \trans})$. 
The \edairl{} procedure is detailed in Algorithm~\ref{algorithm:ed-airl}.

\begin{remark}
    Note that, in every round $k$, we can of course let \edbirl{} and \edairl{} query a batch of several demonstrations from the environment $\trans_k$. 
    In the case of \edairl{}, our experiments showed that this imporves the stability of the reward inference. 
\end{remark}

\section{Experiments}\label{section:experiments}
The primary goal of our experiments is to address the following two questions:
(1) Can we \emph{recover the true reward function} by adaptively designing environments?
(2) Are the reward estimates \emph{more robust} by doing so?

To answer the first question, Section~\ref{subsection:exp_maze} considers the maze task already introduced in Figure~\ref{fig:intro} of the introduction, as it allows us to nicely visualise the learned reward functions. 
To answer the second question, we evaluate our environment design approach on continuous control tasks in Section~\ref{subsection:exp_continuous}, where we slightly perturb environment dynamics to test the robustness of learned rewards. We also include an ablation study by replacing our maximin Bayesian regret environment design with domain randomisation. This allows us to separate the effect of active environment design from the effect of learning a reward function from multiple (possibly randomly chosen) environments. Additional experimental results and details are provided in Appendix~\ref{appendix:experiment_results} and Appendix~\ref{appendix:experiment_details}.

\begin{figure*}[t]

     \centering
     \begin{subfigure}[b]{0.32\textwidth}
         \centering
         \begin{tikzpicture}
            \draw (0, 0) node[inner sep=0] {
            \includegraphics[width=0.45\textwidth]{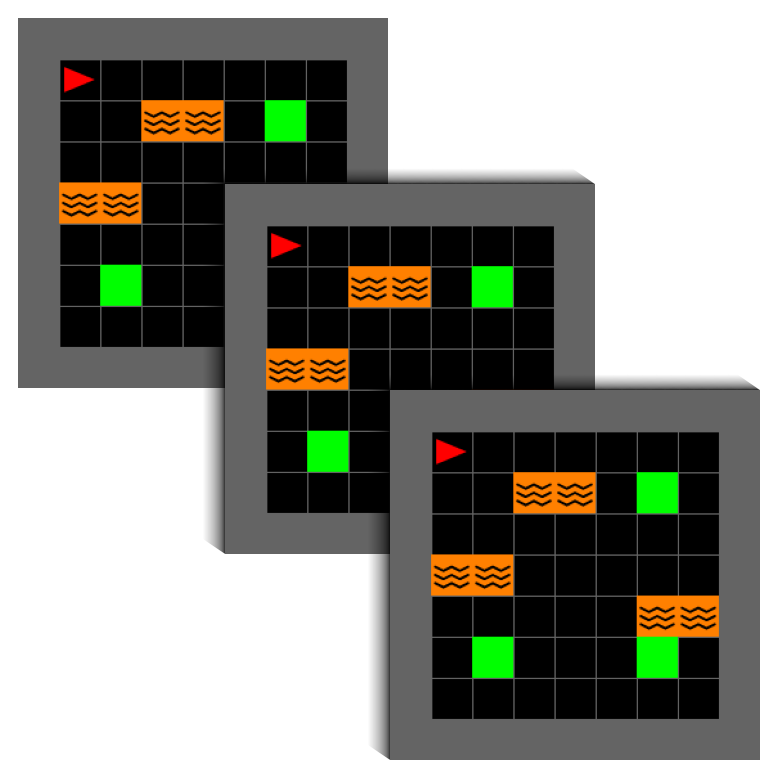}};
            \draw (0, 1.35) node {\tiny Selected Mazes};
        \end{tikzpicture}
         \begin{tikzpicture}
            \draw (0, 0) node[inner sep=0] {
            \includegraphics[width=0.45\textwidth]{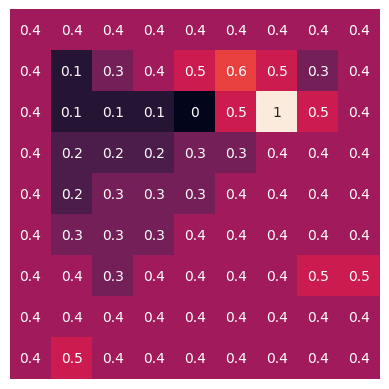}};
            \draw (0, 1.35) node {\tiny Final Reward Estimate};
        \end{tikzpicture}
         \caption{\texttt{Fixed Environment\, BIRL}}
                  \label{subfig:basic_fixed_3rd_round}

     \end{subfigure}
           \hfill
    \begin{subfigure}[b]{0.32\textwidth}
         \centering
         \begin{tikzpicture}
            \draw (0, 0) node[inner sep=0] {
            \includegraphics[width=0.45\textwidth]{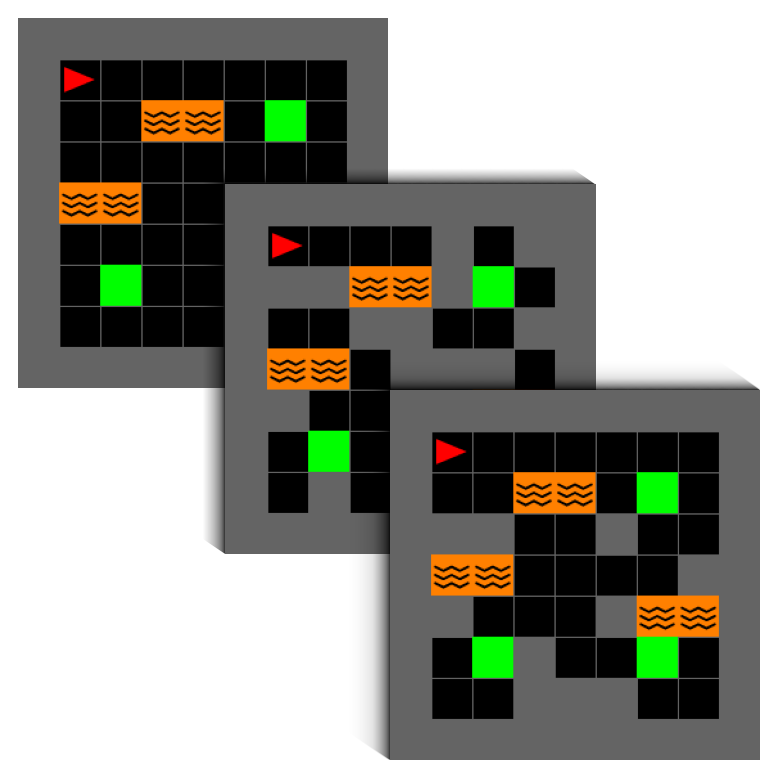}};
            \draw (0, 1.35) node {\tiny Selected Mazes};
        \end{tikzpicture}
         \begin{tikzpicture}
            \draw (0, 0) node[inner sep=0] {
            \includegraphics[width=0.45\textwidth]{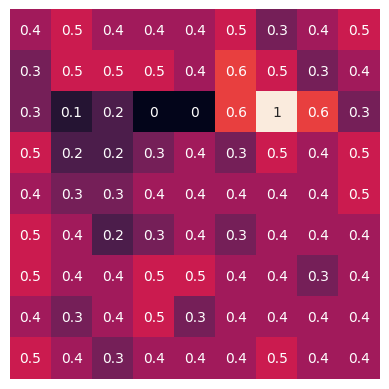}};
            \draw (0, 1.35) node {\tiny Final Reward Estimate};
        \end{tikzpicture}
         \caption{\texttt{Domain Randomisation BIRL}}
                  \label{subfig:basic_dom_ran_3rd_round}
    \end{subfigure}
    \hfill
     \begin{subfigure}[b]{0.32\textwidth}
         \centering
         \begin{tikzpicture}
            \draw (0, 0) node[inner sep=0] {\includegraphics[width=0.45\textwidth]{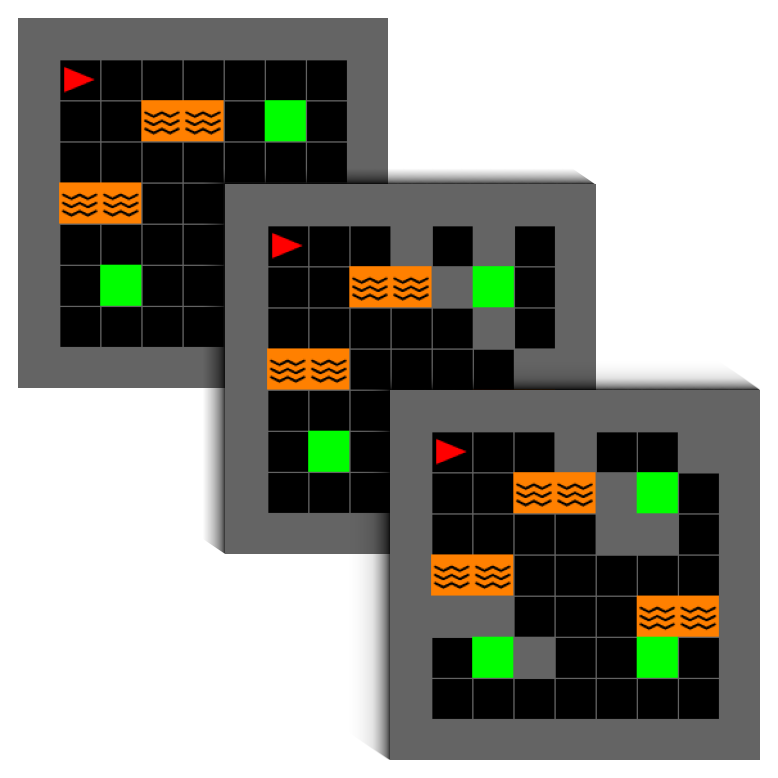}};
            \draw (0, 1.35) node {\tiny Selected Mazes};
        \end{tikzpicture}
         \begin{tikzpicture}
            \draw (0, 0) node[inner sep=0] {
            \includegraphics[width=0.45\textwidth]{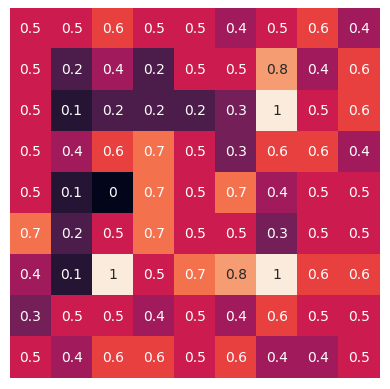}};
            \draw (0, 1.35) node {\tiny Final Reward Estimate};
        \end{tikzpicture}
         \caption{\texttt{ED-BIRL}}
         \label{subfig:basic_ed_birl_3rd_round}
     \end{subfigure}
     

     \caption{The discrete maze task from Figure~\ref{fig:intro}. The goal in the discrete maze environment is to reach one of the three green goal squares while avoiding lava squares. For each approach, we show on the left the chosen mazes and visualise on the right the posterior mean reward after three rounds. In (a), the expert always acts in the same, fixed maze. In (b), the maze is randomly generated by adding obstacles, i.e., gray squares, uniformly at random. The proposed \edbirl\, approach, which adaptively chooses maze layouts based on past reward estimates, is shown in (c). We use the same colour scale as in Figure~\ref{fig:intro}, which ranges from black (0.0) to red (0.5) to white (1.0).}
     \label{fig:maze_exp_3rd_round}
\end{figure*}

\subsection{Recovering the True Reward Function}\label{subsection:exp_maze}
In this experiment, we consider a discrete maze task inspired by minigrid \citep{chevalier2023minigrid}, in which the learner has the ability to add obstacles to a base layout of the maze. We visualise the selected mazes and estimated rewards, and evaluate whether our approach can recover the true reward function. 

\paragraph{Experimental Setup.} In the maze task, the objective is to reach one of three goal squares while avoiding lava.
Here, any maze from the set of environments is obtained by adding, removing, or moving gray obstacles. Note that the goal and lava squares cannot be moved. The learner gets to select mazes from this set and observes two expert trajectories for every chosen maze.\footnote{We give the learner two trajectories per chosen environment to provide a stronger learning signal to $\texttt{BIRL}$.}
The true reward function, which is unknown to the learner, yields reward $1$ in goal squares and reward $-1$ in lava squares. We discount the problem by $\gamma = 0.9$, so that the expert tries to reach a goal square as quickly as possible. 

We compare our approach, \texttt{ED-BIRL}, with learning from a single, fixed maze as well as learning from mazes that were randomly generated. We randomly generate these mazes by adding an obstacle to a tile with probability $0.3$.\footnote{Naturally, such randomly generated mazes can be very different every iteration, and we can only display exemplary mazes for domain randomisation in Figure~\ref{fig:maze_exp_3rd_round}. Nevertheless, the presented examples shall serve as an illustration of the disadvantages of using domain randomisation in IRL, where demonstrations are costly.} The inference for all three approaches is done using $\texttt{BIRL}$ and the computed reward estimates are scaled to $[0,1]$ and rounded to one decimal.

\vspace{-0.2cm}
\paragraph{Results.} 
In Figure~\ref{fig:maze_exp_3rd_round}, we observe that \texttt{ED-BIRL} recovers the location of all three goal squares after three rounds. 
Moreover, the learner is able to identify the location of all lava strips in Figure~\ref{subfig:basic_ed_birl_3rd_round}, i.e., squares with negative reward. 
Hence, by adaptively designing a sequence of environments, \texttt{ED-BIRL} is capable of recovering all performance-relevant aspects of the unknown reward function. 

In contrast, learning from a fixed environment (Figure~\ref{subfig:basic_fixed_3rd_round}) as well as domain randomisation (Figure~\ref{subfig:basic_dom_ran_3rd_round}) both fail to recover the location of all goal squares, let alone lava squares. In a fixed maze, any near-optimal policy will visit the closest goal state only, which in this case is in the top right corner of the maze. 
We also see that using domain randomisation is impractical for IRL, as we require carefully curated mazes to recover the true reward function. Even worse, by obliviously randomising the maze layout, we may create unsolvable environments for the human expert, which yield no information at all (see Figure~\ref{subfig:basic_dom_ran_3rd_round}). Additional results on the tabular maze task can be found in Appendix~\ref{appendix:experiment_results}.

\begin{table*}[t]
\setlength{\tabcolsep}{2pt} 
\renewcommand{\arraystretch}{1.3} 
\resizebox{\linewidth}{!}{%
\begin{tabular}{L{0.1\textwidth}||C{0.05625\textwidth}|C{0.05625\textwidth}||C{0.05625\textwidth}|C{0.05625\textwidth}||C{0.05625\textwidth}|C{0.05625\textwidth}||C{0.05625\textwidth}|C{0.05625\textwidth}||}
\multicolumn{1}{C{0.1\textwidth}||}{}&
  \multicolumn{2}{C{0.225\textwidth}||}{Continuous maze} &
  \multicolumn{2}{C{0.225\textwidth}||}{Hopper} &
  \multicolumn{2}{C{0.225\textwidth}||}{HalfCheetah} &
  \multicolumn{2}{C{0.225\textwidth}||}{Swimmer} \\ \cline{2-9}
\multicolumn{1}{C{0.1\textwidth}||}{} &
  \multicolumn{1}{C{0.1125\textwidth}|}{Demo} &
  \multicolumn{1}{C{0.1125\textwidth}||}{Test} &
  \multicolumn{1}{C{0.1125\textwidth}|}{Demo} &
  \multicolumn{1}{C{0.1125\textwidth}||}{Test} &
  \multicolumn{1}{C{0.1125\textwidth}|}{Demo} &
  \multicolumn{1}{C{0.1125\textwidth}||}{Test} &
  \multicolumn{1}{C{0.1125\textwidth}|}{Demo} &
  \multicolumn{1}{C{0.1125\textwidth}||}{Test} \\ \midrule
\multicolumn{1}{L{0.1\textwidth}||}{$\edairl$} &
  \multicolumn{1}{C{0.115\textwidth}|}{\textbf{68±04}} &
  \multicolumn{1}{C{0.115\textwidth}||}{\textbf{71±02}} &

  \multicolumn{1}{C{0.115\textwidth}|}{\textbf{63±07}} &
  \multicolumn{1}{C{0.115\textwidth}||}{52±04} &

  \multicolumn{1}{C{0.115\textwidth}|}{\textbf{40±11}} &
  \multicolumn{1}{C{0.115\textwidth}||}{35±13} &

  \multicolumn{1}{C{0.115\textwidth}|}{80±19} &
  \multicolumn{1}{C{0.115\textwidth}||}{69±12} \\
  
\multicolumn{1}{L{0.1\textwidth}||}{$\drairl$} &
  \multicolumn{1}{C{0.115\textwidth}|}{52±07} &
  \multicolumn{1}{C{0.115\textwidth}||}{53±12} &
  
  \multicolumn{1}{C{0.115\textwidth}|}{59±06} &
  \multicolumn{1}{C{0.115\textwidth}||}{\textbf{56±04}} &

  \multicolumn{1}{C{0.115\textwidth}|}{\textbf{40±13}} &
  \multicolumn{1}{C{0.115\textwidth}||}{\textbf{40±11}} &
    
  \multicolumn{1}{C{0.115\textwidth}|}{45±04} &
  \multicolumn{1}{C{0.115\textwidth}||}{53±05} \\ \cdashline{1-9}
\multicolumn{1}{L{0.1\textwidth}||}{$\airl$} &
  \multicolumn{1}{C{0.115\textwidth}|}{33±09} &
  \multicolumn{1}{C{0.115\textwidth}||}{52±07} &
  
  \multicolumn{1}{C{0.115\textwidth}|}{38±03} &
  \multicolumn{1}{C{0.115\textwidth}||}{34±04} &

  \multicolumn{1}{C{0.115\textwidth}|}{29±09} &
  \multicolumn{1}{C{0.115\textwidth}||}{16±07} &
  
  \multicolumn{1}{C{0.115\textwidth}|}{40±09} &
  \multicolumn{1}{C{0.115\textwidth}||}{44±08} \\ 
\multicolumn{1}{L{0.1\textwidth}||}{$\rime$} &
  \multicolumn{1}{C{0.115\textwidth}|}{-105±12} &
  \multicolumn{1}{C{0.115\textwidth}||}{-52±03} &
  
  \multicolumn{1}{C{0.115\textwidth}|}{61±01} &
  \multicolumn{1}{C{0.115\textwidth}||}{53±02} &

  \multicolumn{1}{C{0.115\textwidth}|}{-21±08} &
  \multicolumn{1}{C{0.115\textwidth}||}{-11±09} &

  \multicolumn{1}{C{0.115\textwidth}|}{-05±01} &
  \multicolumn{1}{C{0.115\textwidth}||}{-04±01} \\
  
\multicolumn{1}{L{0.1\textwidth}||}{$\gail$} &
  \multicolumn{1}{C{0.115\textwidth}|}{20±05} &
  \multicolumn{1}{C{0.115\textwidth}||}{17±01} &
  
  \multicolumn{1}{C{0.115\textwidth}|}{40±02} &
  \multicolumn{1}{C{0.115\textwidth}||}{34±01} &
  
  \multicolumn{1}{C{0.115\textwidth}|}{-12±02} &
  \multicolumn{1}{C{0.115\textwidth}||}{-06±01} &

  \multicolumn{1}{C{0.115\textwidth}|}{111±00} &
  \multicolumn{1}{C{0.115\textwidth}||}{110±01} \\ 
\multicolumn{1}{L{0.1\textwidth}||}{$\bc$} &
  \multicolumn{1}{C{0.115\textwidth}|}{11±00} &
  \multicolumn{1}{C{0.115\textwidth}||}{22±00} &
  
  \multicolumn{1}{C{0.115\textwidth}|}{-12±02} &
  \multicolumn{1}{C{0.115\textwidth}||}{-06±01} &
  
  \multicolumn{1}{C{0.115\textwidth}|}{-23±01} &
  \multicolumn{1}{C{0.115\textwidth}||}{-14±01} &

  \multicolumn{1}{C{0.115\textwidth}|}{\textbf{124±01}} &
  \multicolumn{1}{C{0.115\textwidth}||}{\textbf{130±01}}
\end{tabular}%
}
\vspace{3px}
\caption{\label{tab:normalized_returns} Performance on the continuous control tasks. Average normalised returns and their standard error over the demo and test sets (averaged over 10 runs for MuJoCo and 5 runs for the continuous maze). 100 indicates expert performance, whereas 0 indicates the performance of a random policy.}
\vspace{-0.1cm}
\end{table*}

\subsection{Learning Robust Reward Functions in Continuous Control Problems}\label{subsection:exp_continuous}

In this set of experiments, we evaluate \edairl{} on continuous control tasks. We consider a continuous maze task \citep{fu2017learning} as well as MuJoCo environments \citep{todorov2012mujoco}. 
For these test suites, we no longer have direct access to the transition function but can only access their corresponding simulators. We here compare $\edairl$ with Domain Randomisation \airl{} ($\drairl$) and standard $\airl$ \citep{fu2017learning}. Moreover, we also benchmark against imitations learning approaches, including Behavioral Cloning ($\bc$), Generative Adversarial Imitation Learning ($\gail$) \citep{ho2016generative} and $\rime$ \cite{chae2022robust}. Notably, $\rime$ is an imitation learning algorithm that tries to find a robust policy based on demonstrations from multiple environments. 

\paragraph{Experimental Setup.}
We treat the continuous maze problem without additional walls as well as the original MuJoCo environments in their default settings as \emph{base environments}. For each task, we then build a \emph{disjoint} set of \emph{demo environments} (i.e., environments in which we could potentially observe the expert) and \emph{test environments} by altering their respective base environments. 

In the continuous maze, this is done by adding walls to the maze. For the MuJoCo environments, we tweak the environment dynamics by changing parameters of the physics engine, which include body features of the agent as well as gravity parameters (see Appendix~\ref{appendix:experiment_details} for details). In total, we create 20 demo environments and 10 test environments for each task, where we want to emphasize that these sets are disjoint so that we never observe the expert in any test environments. We include the base environment by default in the set of demo environments. Exemplary demo and test environments are illustrated in Figure~\ref{fig:environment_tweaking}.

For the approaches that learn from multiple environments ($\edairl$, $\drairl$, $\rime$), the learner has access to the set of demo environments out of which it can sequentially choose $5$ ($10$ for the continuous maze) demo environments for the expert to act in. The learner always chooses the base environment first, serving as the initialisation for $\edairl$ and ensuring a fair comparison.
Subsequently, $\drairl$ chooses environments from the demo set uniformly at random without replacement, and then uses \airlme{} to compute a reward estimate from all observed trajectories. As $\rime$ does not estimate the reward function, we cannot plug-in our environment design approach. Instead, we select environments from the demo set analogously to $\drairl$.


In total, every approach observes a total of $m=50$ expert trajectories. Specifically, $\edairl$, $\drairl$, and $\rime$ observe $10$ trajectories ($5$ for the continuous maze) in each environment they select. In contrast $\airl$, $\gail$, and $\bc$ observe $50$ trajectories in the base environment. 

For the IRL approaches ($\edairl$, $\drairl$, $\airl$), we evaluate the final reward estimates after observing all 50 expert trajectories. We do this by optimising a fresh policy in each of the demo and test environments using the learned reward function. We then evaluate this policy under the true reward function in each environment before reporting the average return across the demo and test sets. For the imitation learning algorithms ($\rime$, $\gail$, $\bc$) we report the average return the imitating policy achieves when deployed on the demo and test set environments.

\begin{figure}[t]
    \vspace{.4cm}
     \centering
         \begin{subfigure}[b]{0.23\textwidth}
         \centering
         \begin{subfigure}[b]{0.48\textwidth}
            \centering
            \includegraphics[width=\textwidth]{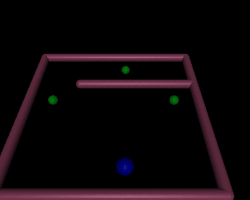}\hfill 
            \caption*{Demo}
         \end{subfigure}
         \begin{subfigure}[b]{0.48\textwidth}
            \centering
            \includegraphics[width=\textwidth]{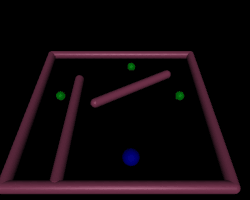}\hfill 
            \caption*{Test}
         \end{subfigure}
         \caption{Continuous maze\label{fig:maze_sample}}
         \end{subfigure}
         \begin{subfigure}[b]{0.23\textwidth}
         \centering
         \begin{subfigure}[b]{0.48\textwidth}
            \centering
            \includegraphics[width=\textwidth]{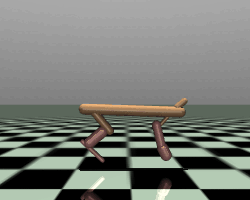}\hfill 
            \caption*{Demo}
         \end{subfigure}
         \begin{subfigure}[b]{0.48\textwidth}
            \centering
            \includegraphics[width=\textwidth]{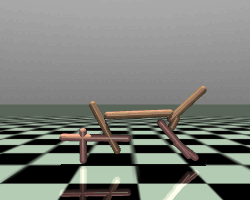}\hfill 
            \caption*{Test}
         \end{subfigure}
         \caption{HalfCheetah\label{fig:hopper_environments}}
         \end{subfigure}

         

        \caption{Examples of demo and test environments.
        \label{fig:environment_tweaking}}

        \vspace{-0.3cm}

\end{figure}

\paragraph{Results.} 
In Table~\ref{tab:normalized_returns}, we see that $\edairl$ overall achieves the highest performance on the demo and test sets. 
In most cases, $\edairl$ outperforms $\drairl$, sometimes significantly (Swimmer), with the exception of HalfCheetah, where both approaches perform similarly. 
Notably, $\edairl$ consistently outperforms $\airl$ by a large margin in all test suites. 

As expected, the imitating policies from $\gail$ and $\bc$ fail to transfer across variations of the base environment (Table~\ref{tab:normalized_returns}). An exemption is the Swimmer environment, in which the imitating policies of $\gail$ and $\bc$ were extremely robust to any changes in the environment dynamics (gravity and body features). 
Lastly, in our experimental setup, $\rime$ struggles to learn a reasonable policy with the exception of Hopper where $\rime$ performs similarly to $\edairl$ and $\drairl$. The reason for its bad performance could be that in our case the environments in the demo and test set are much more different from one another compared to the study done in \citet{chae2022robust}. 

In Figure~\ref{fig:airl_vs_edairl_budget}, we also analyse the improvement of $\edairl$ versus $\airl$ as we increase the total number of expert trajectories in the continuous maze task. Whereas $\airl$ does not significantly improve with more trajectories, we observe that $\edairl$ shows steady improvement until plateauing at around 65\% expert level performance when given a budget of $25$ trajectories. 
More generally, even though $\airl$ is designed to learn a robust reward function, it appears that learning from a single environment is insufficient to generalise to slight variations of a base environments, independently of the number of trajectories. 

We provide several additional experimental results for $\edbirl$ and $\edairl$ in Appendix~\ref{appendix:experiment_results}, including insights into the environments that are being selected by $\edairl$ in the continuous maze task. 

\begin{figure}[t]
     \centering
         \centering
         \includegraphics[width=0.48\textwidth]{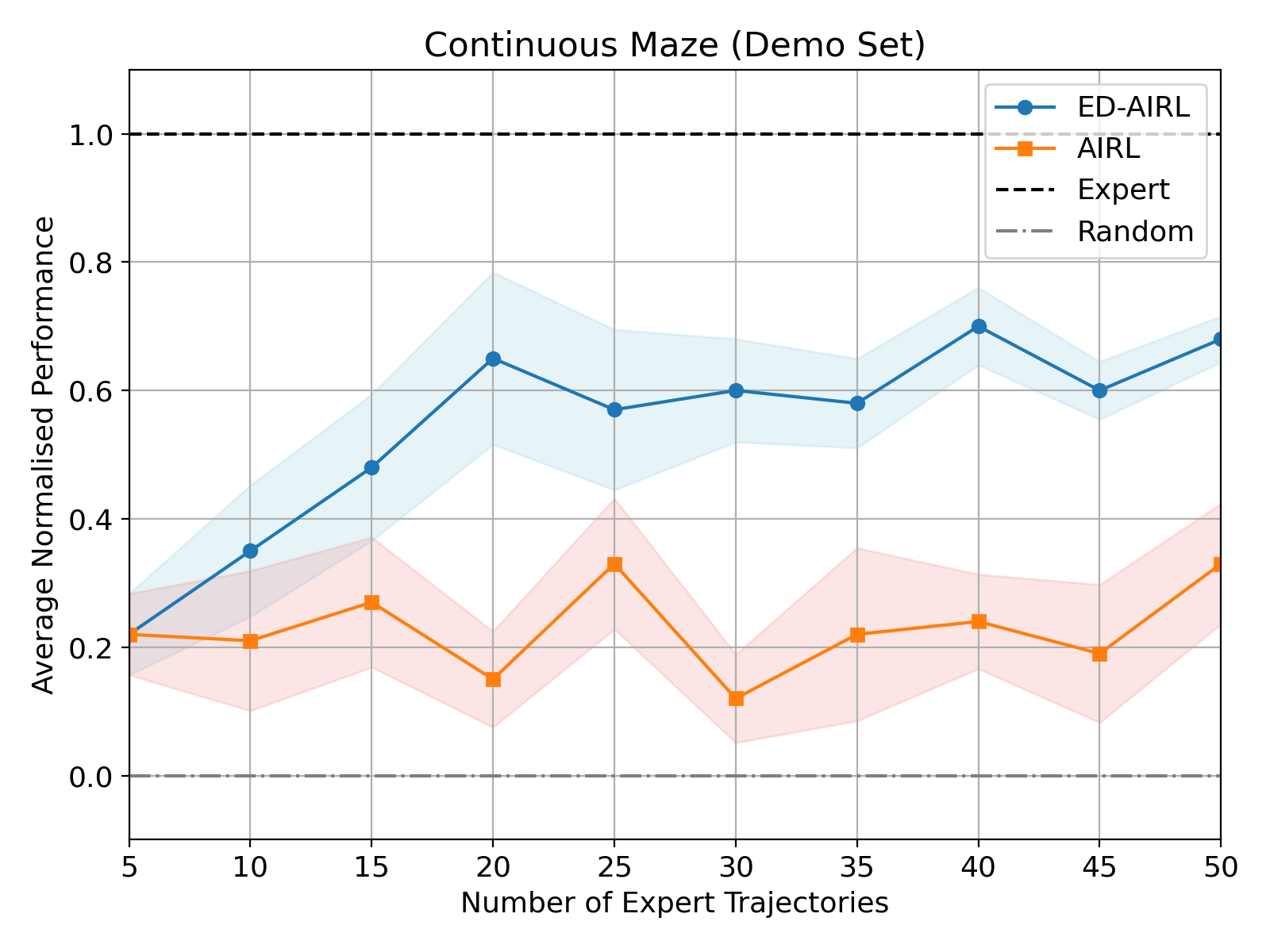}
         \vspace{-.6cm}
        \caption{Average normalised performance on the demo set in the continuous maze as we increase the number of expert trajectories (averaged over 5 runs). The standard error is shown in shaded colour. Every 5 trajectories, $\edairl$ chooses a new environment for the expert to act in. In contrast, $\airl$ always observes the expert act in the same base environment. \label{fig:airl_vs_edairl_budget}}
\end{figure}

\vspace{-0.1cm}
\section{Conclusion} 
\label{section:conclusion}

We introduced Environment Design for Inverse Reinforcement Learning, which concerns how to optimally design a sequence of environments to acquire expert demonstrations. We proposed a general methodology for doing so, based on a maximin Bayesian regret objective. This idea can be directly combined with Bayesian IRL methods to obtain the $\edbirl$ algorithm. We also propose an approximate method, \edairl{}, based on an ensemble of point estimates from the $\airl$ procedure. Experimentally, we show that our approach is able to recover almost all performance-relevant aspects of the unknown reward function, and improves the robustness of learned reward functions. 

\vspace{-0.2cm}
\paragraph{Limitations.} The computational load of $\edairl$ can become heavy since the number of policy optimisations grows quadratically with the number of rounds. 
We can reduce the computational load by sampling a subset $\mathcal{T}_\subset$ of environments for the environment design process. In this case, $\edairl$ requires $\mathcal{O}(|\mathcal{T}_\subset|m^2)$ policy optimisations, where $m$ is the number of times we select an environment.  


\vspace{-0.2cm}
\paragraph{Future directions.} In our experiments, we found the inference of $\airl$, as well as other MaxEnt IRL methods, to be unstable. 
In future work, we could use a Gaussian process model for the reward function~\cite{levine2011nonlinear}. This can be made scalable, for instance, through a variational inference approach~\cite{chan2021scalable}. 

\bibliographystyle{unsrtnat}
\bibliography{ref}

\appendix 

\newpage 

\section*{Appendix}

\section{Proofs}

\begin{proof}[Proof of Lemma~\ref{lem:maximin_zero}]
For simplicity of exposition, we assume here that the posterior $\P$ is discrete. Now, as the value function is linear in rewards, we have 
\begin{align*}
    \min_{\pi} \bayesregret_\bel (\trans, \pi) = \bayesregret_\bel (\trans, \pi_{\bar \r, \trans}),
\end{align*}
where $\pi_{\bar \r, \trans}$ is the optimal policy w.r.t.\ the posterior mean $\bar \r = \E_{\r \sim \bel}[\r]$ and the transition function $\trans$. If $\max_{\trans \in \Trans} \min_{\pi \in \Pi} \bayesregret_\bel (\trans, \pi) = 0$, it then follows that $\max_{\trans \in \Trans} \bayesregret_\bel (\trans, \pi_{\bar \r, \trans}) = 0$, i.e.\ $\val_{\r, \trans}^* = \val_{\r, \trans}^{\pi_{\bar \r, \trans}}$ for all $\r \in \supp(\bel)$ and $\trans \in \Trans$. This must imply that $\val_{\truereward, \trans}^* = \val_{\truereward, \trans}^{\pi_{\bar \r, \trans}}$ for all $\trans \in \Trans$. In other words, $\bar \r$ is optimal for all $\trans \in \Trans$ (under the initial state distribution $\omega$). 
\end{proof}

\section{Algorithms}\label{appendix:algorithms}

\subsection{Finding Maximin Environments}

\paragraph{Structured Environments.}
In the structured environment setting, we can independently choose a transition distribution for each state-action pair. Extended Value Iteration for Structured Environments, provided in Algorithm~\ref{algorithm:extended_VI}, estimates the maximin environment by iteratively making state-individual transition choices. At each stage, and for each state in the state space, we determine the state-transition matrix that maximises the regret for this given state. At the end, we return the environment corresponding to the transition matrices selected in the final stage.

\paragraph{Arbitrary Environments.}
The algorithm, Environment Design with Arbitrary Environments, is provided in Algorithm~\ref{algorithm:brute_force_ed}. It determines an estimate of the maximin environment by computing another approximation of the Bayesian regret. This approximation is done by running full policy optimisation steps in order to recover the policy values for each environment-reward combination. To obtain the Bayesian regret of an environment, we take the sum of the reward losses for each point estimate rewards: the difference between the value of a policy trained with a point estimate reward function and the value of a policy trained with the best guess. We return the environment with the highest Bayesian regret.
To alleviate the computation burden brought by this method, it is possible to sample a subset of environments for selection, rather than the full set of possible environments.

\paragraph{$\edairl$ with Arbitrary Environments.} 
Note that in the special situation running $\edairl$, when estimating the minimax environment at the second expert-learner round (Algorithm~\ref{algorithm:ed-airl} line~\ref{algorithm:ed-airl:environment_selection} when $k=2$) we end up with two possibly different estimations of the same reward function : $R_{1:1}$ and $\r_1$. In this case, it makes sense to consider the absolute reward loss when running environment design (Algorithm~\ref{algorithm:brute_force_ed} line~\ref{algorithm:brute_force_ed:approx_reward_loss}) as it let us discard environments that exhibit no information at all, while only having a single point estimate. Additionally, point estimates learned by $\airl$ must all be normalised with their minimum and maximum when computing the reward losses. 

\subsection{MaxEnt IRL with Multiple Environments}

$\airlme$, given in Algorithm~\ref{algorithm:adv_maxent_irl}, is the extension of $\airl$ to multiple environments.
To learn a common reward to $k$ chosen environments, we maintain $k$ generating policies, where policy $i$ only interacts with environment $i$, and $k$ discriminators, where each discriminator $i$ has its own shaping parameters $\phi_i$ but has shared reward approximator parameters $\theta$. Expert data for all $k$ considered environments are supposedly already collected. After generating trajectories with each policy, we train for a few steps discriminators $1, \dots, k$ to classify expert and generated data from environments $1, \dots, k$. Each policy $i$ is then trained for one policy optimisation step with their respective reward function $\r_{i, \theta, {\phi_i}}$. We repeat this procedure for a fitting number of iterations and then extract the learned common reward function.
Remark that $\airl$ has to be run in "state-only" mode, meaning that we learn a reward function that only depends on the current state. Since action meanings change with the transition dynamics, it does not make sense to include an action dependence to a reward that has to be common to multiple transition functions \cite{fu2017learning}.

\newpage
\begin{algorithm}[t]
\caption{Extended Value Iteration for Structured Environments}
\label{algorithm:extended_VI}
\begin{algorithmic}[1]
\STATE \textbf{input} environments $\Trans = \{\Trans_s\}_{s \in \S}$, empirical distribution $\hat \P$, best guess $\bar \r$ 
\STATE \textbf{repeat} until $\val_{\bestR}$ and $\val_{\r}$ converge:
\FOR{$s\in\S$}
\STATE $\trans_s = \argmax\limits_{\trans_s \in \Trans_s} \Big\{  \E_{\r\sim \hat \P} \big[ \max\limits_{a \in \A} \trans_{s, a}^\top  \val_\r\big] - \max\limits_{b \in \A} \trans_{s, b}^\top \val_{\bar \r}\Big\}$
\STATE $\val_{\r} (s) = \max\limits_{a \in \A} \r(s) + \gamma \trans_{s, a}^\top \val_\r$ for every $\r \sim \hat \P$ 
\STATE $\val_{\bar \r} (s) = \max\limits_{b \in \A} \bar \r(s) + \gamma \trans_{s, b}^\top  \val_{\bar \r}$

\ENDFOR 
\RETURN environment $\trans = \{ \trans_s\}_{s \in \S}$ 
\end{algorithmic}
\end{algorithm} 

\begin{algorithm}[t]
\caption{Environment Design with Arbitrary Environments} \label{algorithm:brute_force_ed}
\begin{algorithmic}[1]
\STATE \textbf{input\,} set of environments $\Sim$, point estimates $\{ \r_1, \dots, \r_k\}$, best guess $\bar \r$ 
\STATE \textcolor{gray}{// if necessary, sample a subset $\Sim_{\subset}$ from $\Sim$}
\FOR{$\trans \in \Sim$}
\STATE calculate $\bar \pi \in \argmax_{\pi} \val_{\bestR, T}^\pi$  \hfill (policy optimisation)
\FOR{ $\r \in \{ \r_1, \dots, \r_k\}$}
\STATE evaluate $\val_{\r, \trans}^{\bar \pi }$ \hfill (policy evaluation) \hspace{.21cm}
\STATE calculate $\val_{\r, \trans}^* = \max_{\pi} \val_{\r, \trans}^\pi$ \hfill (policy optimisation) 
\STATE $\ell(\r) = \val_{\r, \trans}^* - \val_{\r, \trans}^{\smash{\bar \pi}}$ \label{algorithm:brute_force_ed:approx_reward_loss}
\ENDFOR
\STATE $\bayesregret(\trans) = \sum_{\r \in \{ \r_1, \dots, \r_k\}} \ell(\r)$
\ENDFOR
\STATE \textbf{return\,} $\trans^* = \argmax_{\trans\in \Trans} \bayesregret(\trans)$ 
\end{algorithmic}
\end{algorithm} 
\begin{algorithm}[ht]
\caption{$\airlme$ (\airl{} with Multiple Environments)}
\label{algorithm:adv_maxent_irl}
\begin{algorithmic}[1]
\STATE \textbf{input\,} Observations $\data = (\data_1, \dots, \data_k)$ with $\data_i = (\tau_i, \trans_i)$ 
\STATE Initialise policies $\pi_1, \dots, \pi_k$ and discriminators $D_{1, \theta, {\phi_1}}, \dots, D_{k, \theta, {\phi_k}}$
\FOR{$t = 0, 1, \dots$}
\STATE Collect trajectories $\tau^G_{i, j} = (s_0, a_0, \dots, s_H, a_H)$ by executing $\pi_i$ in $\trans_i$ for $i \in [k]$.
\STATE Train discriminators $D_{1,\theta, {\phi_1}} \dots, D_{k, \theta, {\phi_k}}$ to classify expert data $\tau_1, \dots, \tau_k$ from samples $\{\tau^G_{1, j}\}_j, \dots, \{\tau^G_{k, j}\}_j$, respectively, via logistic regression with shared parameter $\theta$. \label{algline:discriminator_update}
\STATE Update reward $\r_{i, \theta, {\phi_i}}(s, a, s') \leftarrow\log D_{i, \theta, {\phi_i}}(s, a, s') - \log (1-D_{i, \theta, {\phi_i}} (s, a, s'))$ for $i\in [k]$. 
\STATE Update $\pi_1, \dots, \pi_k$ with respect to $\r_{1, \theta, {\phi_1}},\dots,\r_{k, \theta, {\phi_k}}$ using any policy optimisation method. \label{algline:policy_update}
\ENDFOR
\RETURN $g_{\theta}$ \hfill (extract the reward approximator)
\end{algorithmic}
\end{algorithm}

\section{Additional Experimental Results \label{appendix:experiment_results}}

\subsection{Learning Robust Reward Functions on Randomly Generated MDPs}\label{subsection:exp_robustness}

The learner is provided with a set of demo environments they can select for a demonstration. Afterwards, the agent is evaluated on a set of test environments. 
The performance on the test set captures the generalisation ability of the learned rewards to new dynamics.    


\begin{figure}[t]
     \centering
         \begin{subfigure}[b]{0.45\textwidth}
         \centering
         \includegraphics[width=\textwidth]{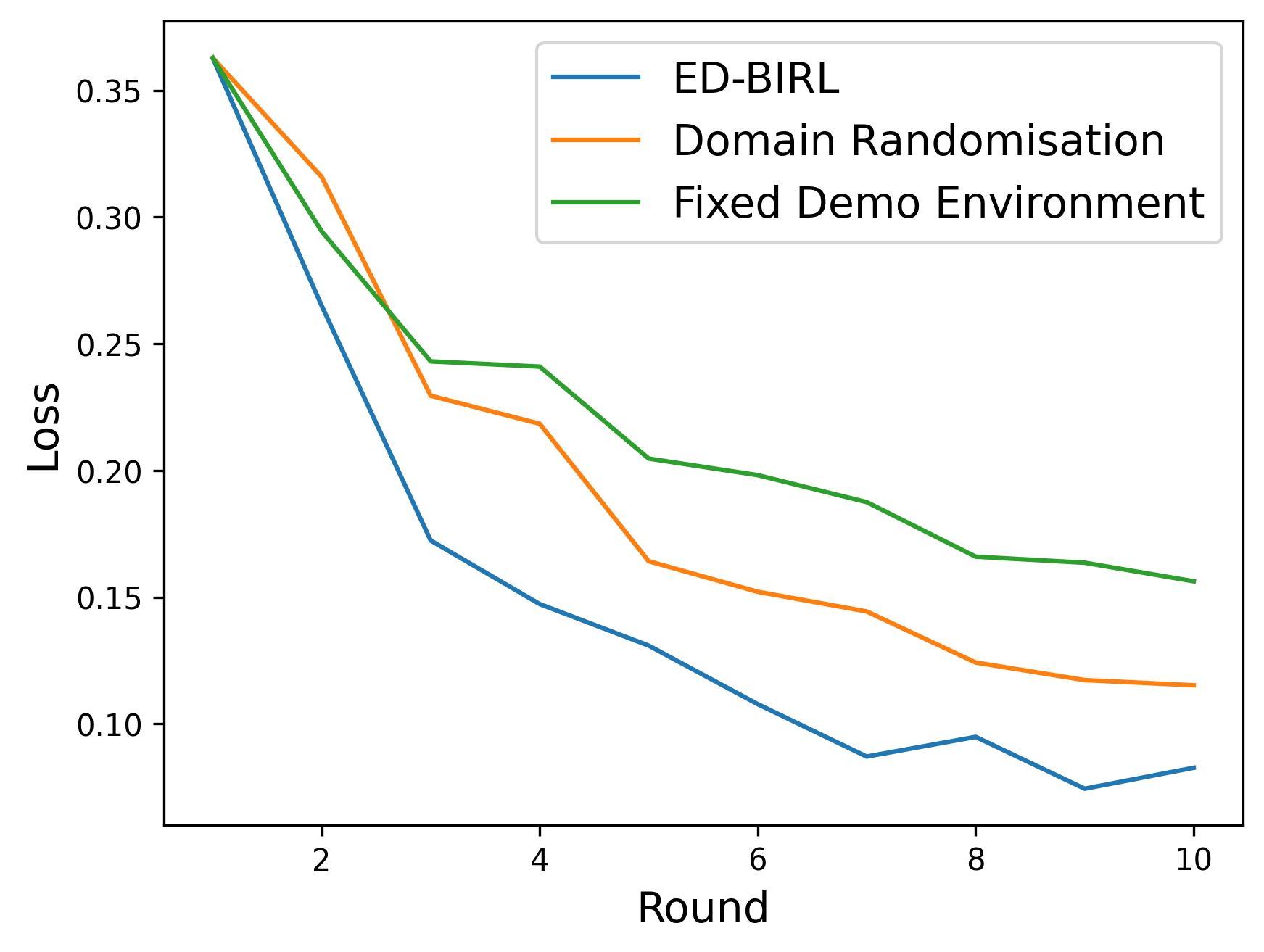}
         \caption{Average utility \emph{loss} of \edbirl, \texttt{Domain Randomisation}, and \texttt{Fixed Environment\,IRL} over $10$ rounds. The learned rewards are evaluated on a set of test environments that differ from the base environment by at most $\rhotest =0.5$.}
         \label{subfig:robustness_rounds}
     \end{subfigure}
      \hfill
    \begin{subfigure}[b]{0.45\textwidth}
         \includegraphics[width=\textwidth]{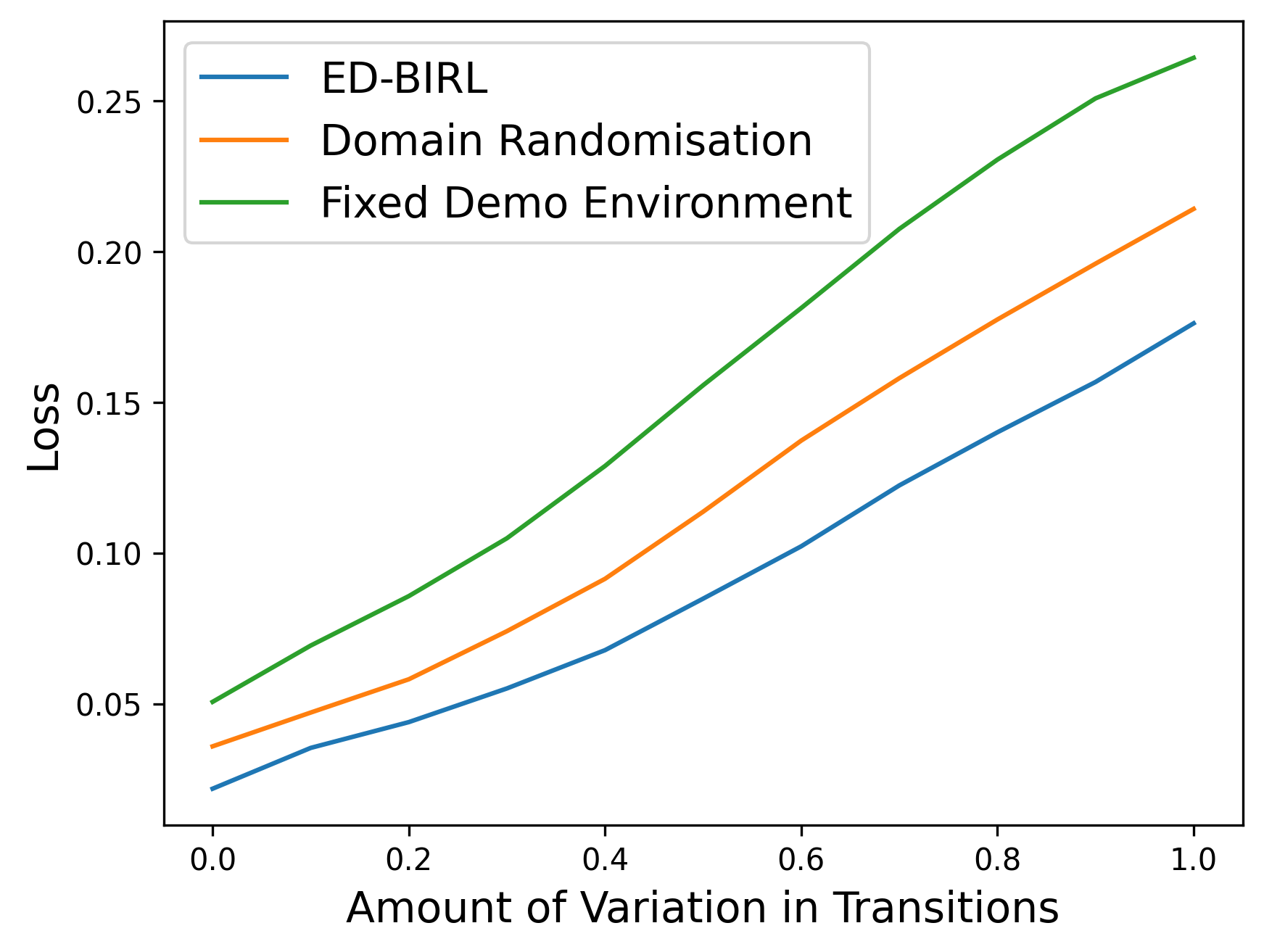}
        \caption{Along the $x$-axis we increase $\rhotest$, i.e.\ the amount of variation in the test environments. We evaluate the learned reward functions after $10$ rounds of interaction with the expert, i.e.\ the final reward estimate from Figure~\ref{subfig:robustness_rounds}.}
        \label{subfig:robustness_variation}
     \end{subfigure}
     \caption{On a randomly generated MDP task, we evaluate  the robustness of reward estimates learned by \edbirl, \texttt{Domain Randomisation}, and \texttt{Fixed Environment\,IRL}, respectively. }
     \label{fig:robustness}
\end{figure}

\paragraph{Experimental Setup.}
We first randomly generate a base MDP $(\S, \A, \basetrans, \truereward, \gamma, \omega)$ with base transition function $\basetrans$. We then construct the set of possible demo environments, here denoted $\Transdemo$ instead of $\Trans$ to clearly distinguish between demo and test environments, by sampling state-transition functions that differ from the base transitions $\basetrans$ by at most some value $\rhodemo$ in terms of $\ell_\infty$-distance. 
In our experiments, we set the maximum amount of variation in the demo environments to $\rhodemo = 0.5$. 
Similarly, we create a set of test environments $\Transtest$ with a maximum amount of perturbation $\rhotest$ on which we evaluate the learned reward functions. 
For all three approaches, we evaluate the posterior mean, which is computed using \texttt{BIRL}. 
For all $\trans\in \Transtest$, we optimise a policy w.r.t.\ the posterior mean and $\trans$ and evaluate the computed policy under the true reward function $\truereward$ and transition function~$\trans$. Finally, we average the results over all environments in $\Transtest$.   
We want to emphasise that the way we construct $\Transdemo$ and $\Transtest$, these sets are completely disjunct except for the base transition function, i.e.\ $\Transdemo \cap \Transtest = \{\basetrans\}$. We therefore \emph{do not} observe the expert in the environments that we evaluate our approaches on. 







\paragraph{Results.} In Figure~\ref{subfig:robustness_rounds}, we observe that \edbirl~outperforms domain randomisation and learning from a fixed environments over the course of all rounds. As expected, the loss of all three approaches increases the more diverse the test environments are and the more they differ from the base environment, which can be seen in Figure~\ref{subfig:robustness_variation}. Interestingly, even for $\rhotest=0$, i.e.\ evaluation on the base environment only, \edbirl~slightly outperforms learning directly from the fixed base environment suggesting a superior sample-efficiency of $\edbirl$.

\begin{figure}[t]

\centering
        
    \begin{subfigure}[b]{0.32\textwidth}
         \centering
         \begin{tikzpicture}
            \draw (0, 0) node[inner sep=0] {
            \includegraphics[width=0.45\textwidth]{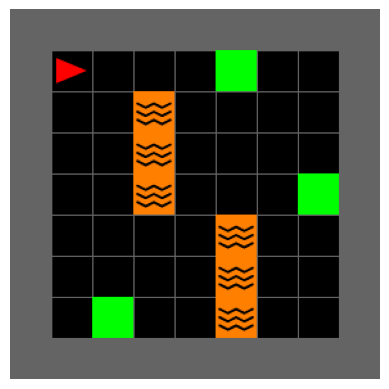}};
            \draw (0, 1.3) node {\tiny Chosen Maze};
            \draw (-1.3, 0) node {\tiny 1};
        \end{tikzpicture}
         \begin{tikzpicture}
            \draw (0, 0) node[inner sep=0] {
            \includegraphics[width=0.45\textwidth]{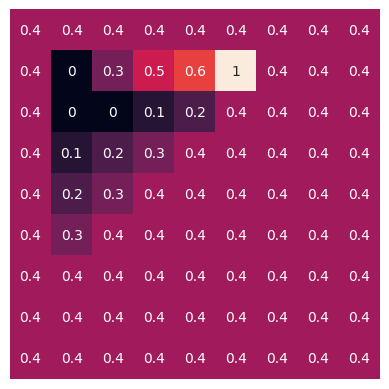}};
            \draw (0, 1.3) node {\tiny Estimated Rewards};
        \end{tikzpicture} \\
         \begin{tikzpicture}
            \draw (0, 0) node[inner sep=0] {\includegraphics[height=0.45\textwidth]{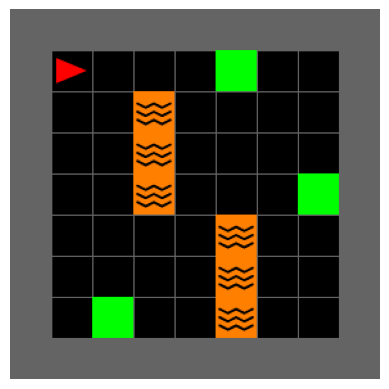}};
            \draw (-1.3, 0) node {\tiny 2};
         \end{tikzpicture}
         \includegraphics[height=0.45\textwidth]{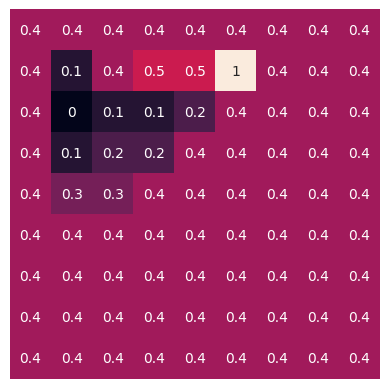}\\
         \begin{tikzpicture}
            \draw (0, 0) node[inner sep=0] {\includegraphics[height=0.45\textwidth]{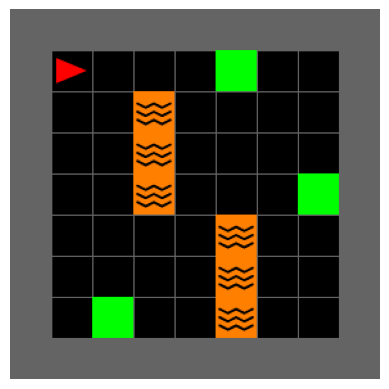}};
            \draw (-1.3, 0) node {\tiny 3};
         \end{tikzpicture}
         \includegraphics[height=0.45\textwidth]{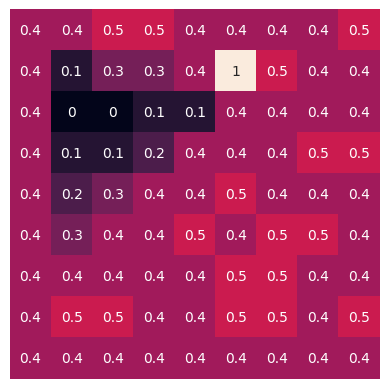}
         \caption{\texttt{Fixed Environment\,IRL}}
                           \label{subfig:lava_bars_fixed}

     \end{subfigure}
           \hfill
    \begin{subfigure}[b]{0.32\textwidth}
         \centering
         \begin{tikzpicture}
            \draw (0, 0) node[inner sep=0] {
            \includegraphics[width=0.45\textwidth]{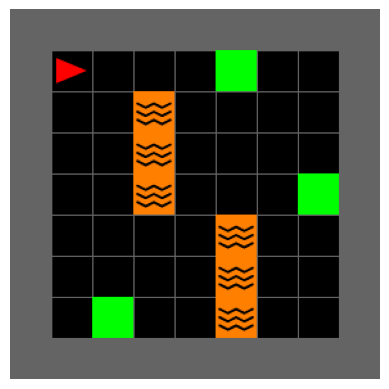}};
            \draw (0, 1.3) node {\tiny Chosen Maze};
        \end{tikzpicture}
         \begin{tikzpicture}
            \draw (0, 0) node[inner sep=0] {
            \includegraphics[width=0.45\textwidth]{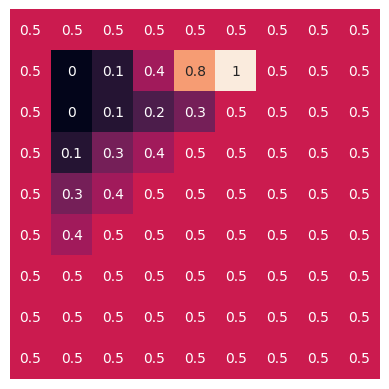}};
            \draw (0, 1.3) node {\tiny Estimated Rewards};
        \end{tikzpicture} \\
         \includegraphics[height=0.45\textwidth]{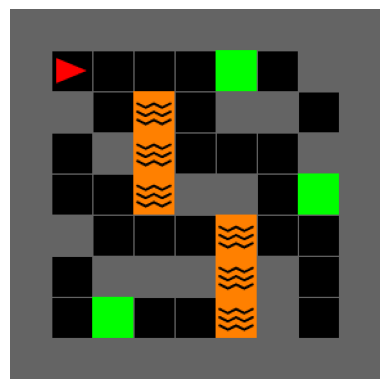} 
         \includegraphics[height=0.45\textwidth]{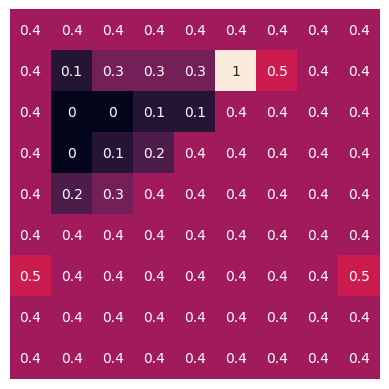}\\
         \includegraphics[height=0.45\textwidth]{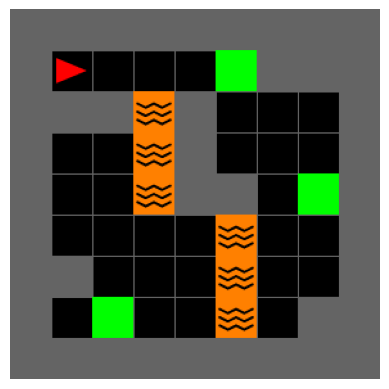} 
         \includegraphics[height=0.45\textwidth]{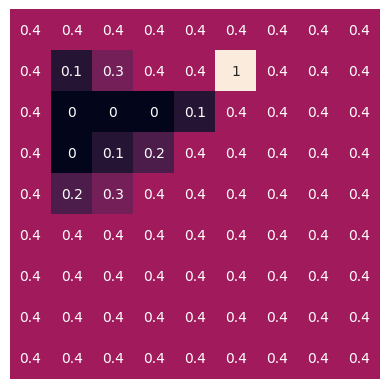}
         \caption{\texttt{Domain Randomisation}}
                           \label{subfig:lava_bars_dom_ran}
     \end{subfigure}
     \hfill
     \begin{subfigure}[b]{0.32\textwidth}
         \centering
         \begin{tikzpicture}
            \draw (0, 0) node[inner sep=0] {\includegraphics[width=0.45\textwidth]{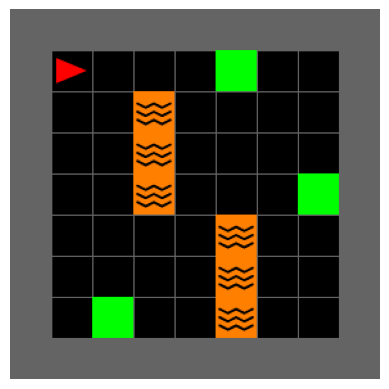}};
            \draw (0, 1.3) node {\tiny Chosen Maze};
        \end{tikzpicture}
         \begin{tikzpicture}
            \draw (0, 0) node[inner sep=0] {
            \includegraphics[width=0.45\textwidth]{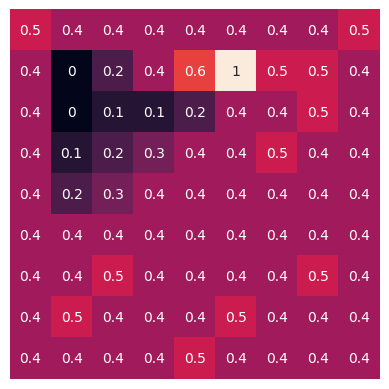}};
            \draw (0, 1.3) node {\tiny Estimated Rewards};
        \end{tikzpicture} \\
         \begin{tikzpicture}
            \draw (0, 0) node[inner sep=0] {\includegraphics[width=0.45\textwidth]{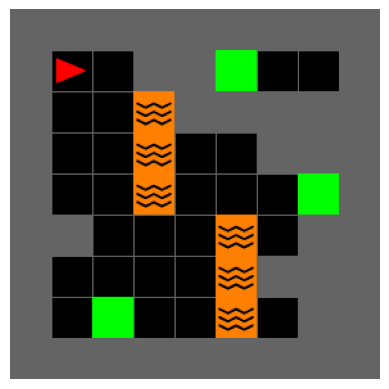}};
        \end{tikzpicture}
         \includegraphics[height=0.45\textwidth]{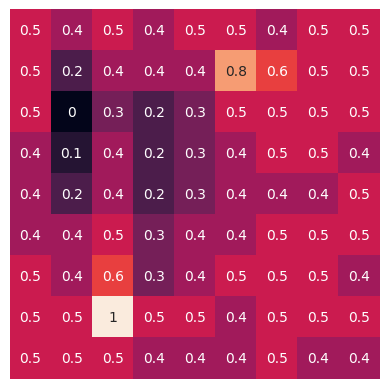}\\
         \begin{tikzpicture}
            \draw (0, 0) node[inner sep=0] {\includegraphics[width=0.45\textwidth]{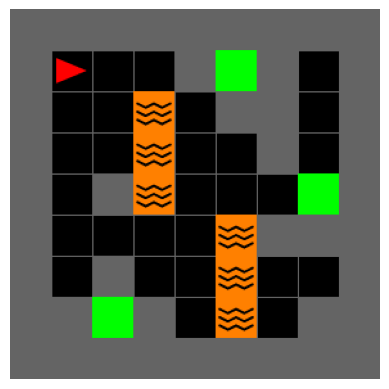}};
        \end{tikzpicture}
         \includegraphics[height=0.45\textwidth]{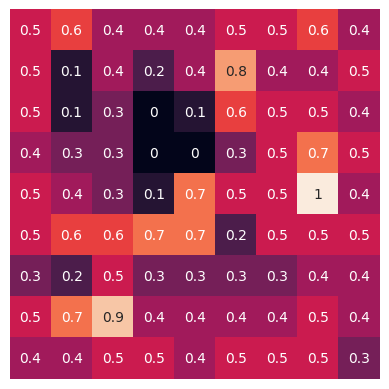}
         \caption{\texttt{ED-BIRL}}
                  \label{subfig:lava_bars_ed_birl}

     \end{subfigure}
     \caption{Comparison of \texttt{ED-BIRL}, \texttt{Fixed Environment\,IRL}, and \texttt{Domain Randomisation} on the first three rounds of another version of the maze problem seen in Section~\ref{subsection:exp_maze}.} 
     \label{fig:exp_maze_d-f}
\end{figure}
\begin{figure}[t]
     \centering
         \begin{subfigure}[b]{0.40\textwidth}
         \centering
         \includegraphics[width=0.99\textwidth]{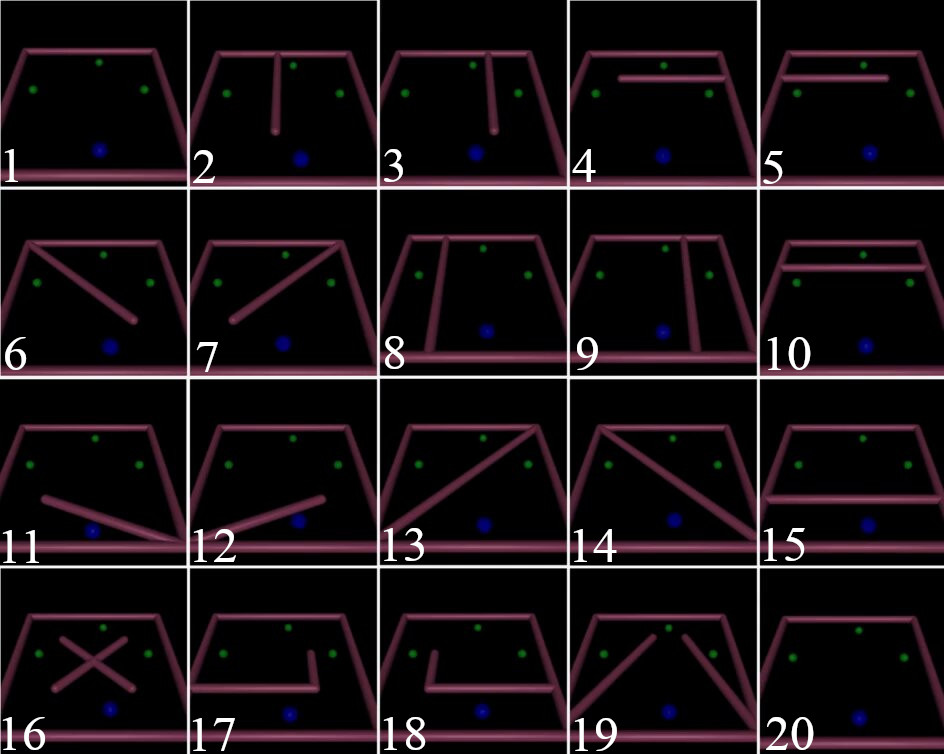} 
         \vspace{26px}
         \caption{Continuous maze demo environments.
         \label{fig:continuous_maze_demo}}
         \end{subfigure}
         \hfill
         \begin{subfigure}[b]{0.56\textwidth}
         \centering
         \includegraphics[width=0.99\textwidth]{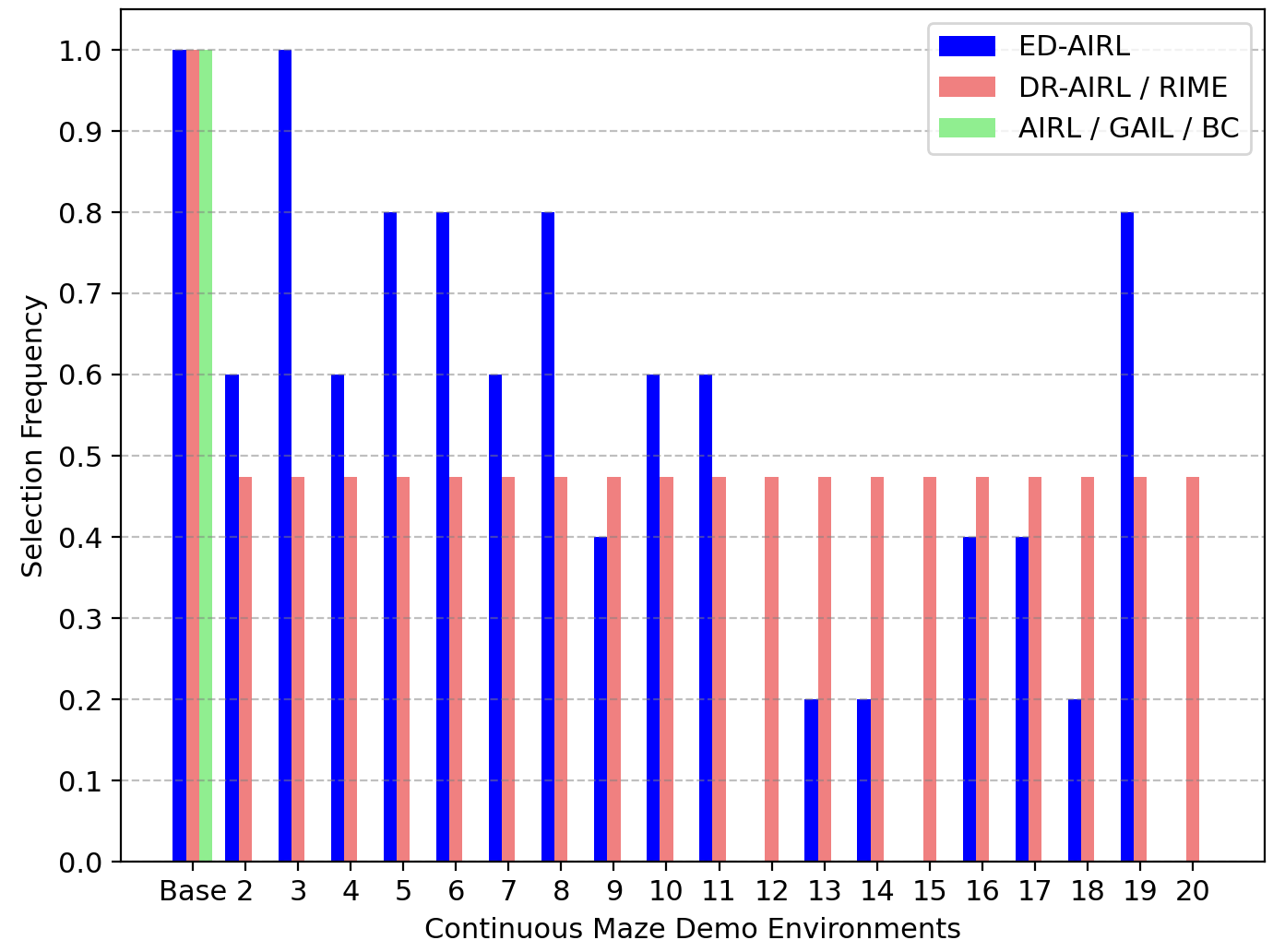} 
        \caption{Frequency at which each continuous maze environment was picked for expert demonstrations.\label{fig:continuous_mazes_selection}}
         \end{subfigure}
        \caption{For the continuous maze task, the learner can observe experts in $n=10$ different environments. $\airl$, $\gail$ and $\bc$ only observe the expert on the base environment, while $\edairl$ and $\drairl$ can observe the expert on any environments in the demo set. For a fair comparison, we force multi-environment approaches to always include the base environment in their selection. Environment labeled 20 is designed to be uninformative: the agent is disabled and can not move around, an expert policy is thus indistinguishable from any other policy under this setting. \label{fig:maze_experiment}}
\end{figure}
\begin{figure}[t]
     \centering
        \includegraphics[width=0.4\linewidth]{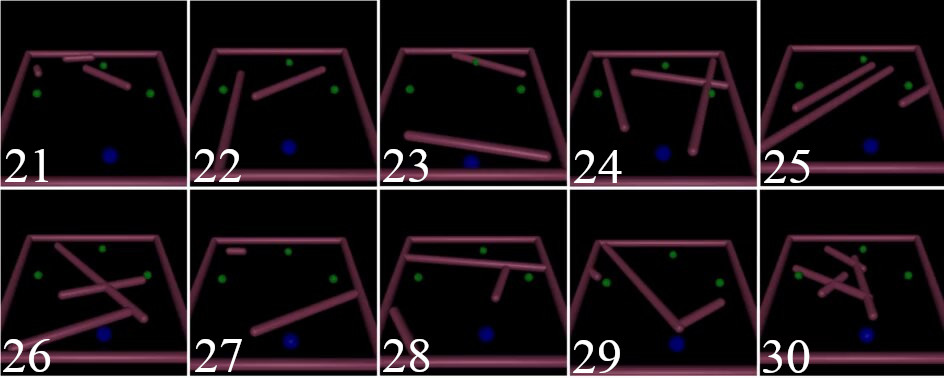} 
        \caption{Continuous maze test environments. \label{fig:continuous_mazes_test}}
        \vspace{-0.2cm}
\end{figure}



\subsection{Recovering the True Reward Function}

We provide an additional visualisation of the learned rewards on another maze problem considered in the experiments from Section~\ref{subsection:exp_maze} in Figure~\ref{fig:exp_maze_d-f}. Results on this version of the maze comforts our past observations. $\edbirl$, Figure~\ref{subfig:lava_bars_ed_birl}, recovers the location of all three goal states and some lava states, after three round. We see that each round, the environment generated by $\edbirl$ forces the expert to visit a goal state which is yet unknown to the reward function. In contrast, \texttt{Fixed Environment\,IRL} and \texttt{Domain Randomisation}, Figures ~\ref{subfig:lava_bars_fixed},\ref{subfig:lava_bars_dom_ran}, only retrieve the location of one goal state. Even when increasing the number of rounds (see Figure~\ref{fig:dr_additional_experiment_seed_0}), \texttt{Domain Randomisation} does not manage to recover all three goals, pointing out clearly the lower sample efficiency of the method.
For the initial maze and the randomly generated mazes, the (optimal) expert trajectory always corresponds to reaching the top right goal state. For this reason, those random environment designs never lead to discovering additional performance-relevant aspects of the reward function.

\subsection{Learning Robust Reward Functions for Continuous Control}

With Figure~\ref{fig:maze_experiment}, we provide a more interpretable visualisation of how $\edairl$ selects environments. After observing the expert on the base environment, $\drairl$ selects environments at uniform, while $\edairl$ performs a curated selection. Figure~\ref{fig:continuous_mazes_selection} shows how often each environment is selected by each approach, on the continuous maze task. We observe that on the 5 times we ran $\edairl$, it consistently avoided environments similar to the base environment (12) and uninformative environments (15 and 20). This is a direct improvement from $\drairl$. We also provide a visualisation of the rewards learned by $\edairl$ and $\drairl$ on the continuous maze.

We provide more detailed results in Table~\ref{tab:detailed_tables_continuous}. Experiments on the MuJoCo Ant environment are also included. Based on the quantiles on both continuous maze and Swimmer, $\edairl$ reliably finds a reward function that solves at least 25\% of the mazes on both sets, which is not ensured by $\airl$. More importantly, this manifests that it is in general safer to learn from a reward function estimated by $\edairl$ as its 25\%-quantile is consistently higher.

\section{Additional Experimental Details \label{appendix:experiment_details}}


\paragraph{Recovering the True Reward Function.} For the experiments in Section~\ref{subsection:exp_maze}, we let the learner observe two trajectories for each maze. This was done in order to speed up the inference of \texttt{BIRL} and reduce the computational cost. The expert was modeled by a Boltzmann-rational policy and thus uniformly selected an optimal action when there were several optimal ones in a given state. 


\paragraph{Learning Robust Reward Functions for Continuous Control.}
For the experiments in Section~\ref{subsection:exp_continuous}, We allow each approach to learn with unlimited interaction with their environments, but evaluate their learned rewards by optimising policies for a fixed amount of environment steps. This is justified by the fact that we want to compare the best reward estimation offered by each approach, only constrained by the amount of expert data $m$. Besides, because increasing the amount of observable environment on $\edairl$ quadratically rises the amount of policy optimisation required, we limited the selection size to $n=10$ for the continuous maze task and $n=5$ for the remaining tasks. Each time we select a new environment in $\edairl$ with Algorithm~\ref{algorithm:brute_force_ed}, we sample a subset $\Trans_\subset$ of size 10 and 5, for the continuous maze task and other tasks respectively. For all $\airl$-based algorithms, we used a two-layer ReLU network with 32 units for the state-only reward approximator and shaping functions. We also smoothed the estimated rewards outputted by $\airl$ and $\airlme$ by taking the average of the estimated reward functions from the last 10 discriminator-generator rounds.

We chose to limit ourselves to the default MuJoCo environment settings as much as possible. This meant excluding robot positions from observations, except in Swimmer where it was deemed necessary.
We utilised the implementations from \cite{gleave2022imitation} for $\gail$ and $\bc$, initially tuned with the inclusion of the position in the observation, explaining some variations in our results.
Regarding $\rime$, we attempted to tune the original implementation from \cite{chae2022robust} and used their weight-shared discriminator approach. 
Lastly, $\airl$ is known to not perform on Ant. \cite{fu2017learning} and \cite{gleave2022imitation} both considered modified versions of the environments, we in contrast kept the default settings in order to observe whether multi-environment approaches could improve the results or not.

All of our policies were optimised with Proximal Policy Optimisation \cite{schulman2017proximal}. Within the scope of each task, demo and test environment experts were trained with identical hyperparameters and for an equal amount of timesteps. For better hindsight, average values of random and expert policies on base environments are given in Table~\ref{tab:random_expert_levels}. 

\paragraph{Continuous maze.}
The task consists in reaching all accessible green targets in whichever order. The agent observes its current position and whether each goal was already reached or not. While rewarded with 1 point for collecting a target, the agent is incurred with an action cost to motivate optimal path making. The environment is considered solved when all reachable targets were collected before the episode of fixed length ends. We provide a visual of the continuous maze demo and test set in Figure~\ref{fig:continuous_maze_demo} and Figure~\ref{fig:continuous_mazes_test}.

\paragraph{Learning Robust Reward Functions on Randomly Generated MDPs.}
We randomly generated an MDP with 50 states and 4 actions using a Dirichlet distribution for the transitions and a Beta distribution for the reward function. For each state we let the demo set of environments contain 15 choices. The size of the test environments was set to be $|\Transtest| = 500$. Every round, the learner got to select a demo environment and observe a single expert trajectory in that environment. We limited the amount of deviation from the base transitions in our experiments according to $\rhodemo$ and $\rhotest$. In particular, note that any choice of $\rhodemo$ implies that $\lVert \basetrans - \trans\rVert_{\infty} = \max_{s, a} \lVert \basetrans (\cdot \mid s, a) - \trans(\cdot \mid s, a)\rVert_1 \leq \rhodemo$ for all $\trans \in \Transdemo$. 
The results were averaged over $5$ complete runs, i.e.\ for $5$ randomly generated problem instances. 

\paragraph{Implementation.}
The code used for all of our experiments is available at \href{https://github.com/Ojig/Environment-Design-for-IRL}{github.com/Ojig/Environment-Design-for-IRL}.

\paragraph{Compute.}
Three AMD EPYC 7302P machines were used. Most of the computation time was spent running ED-AIRL, which requires a large amount of full policy optimisation steps. With 5 expert-learner rounds and depending on the MuJoCo environment ED-AIRL requires from half to a full day of wall-clock computation time on one machine.

\begin{table}[t]

\setlength{\tabcolsep}{2pt} 
\renewcommand{\arraystretch}{1.4} 
\centering
\resizebox{\textwidth}{!}{
\begin{tabular}{L{0.16\textwidth}|C{0.16\textwidth}|C{0.14\textwidth}|C{0.16\textwidth}|C{0.16\textwidth}|C{0.16\textwidth}}
\multicolumn{1}{L{0.16\textwidth}|}{} & 
\multicolumn{1}{C{0.16\textwidth}|}{Continous Maze} &
\multicolumn{1}{C{0.16\textwidth}|}{Swimmer} &
\multicolumn{1}{C{0.16\textwidth}|}{HalfCheetah} &
\multicolumn{1}{C{0.16\textwidth}|}{Hopper} &
\multicolumn{1}{C{0.16\textwidth}}{Ant} \\ \midrule
Random               & 1.14±0.80 & 9±12 & -246±76 & 16±16 & -34±23 \\ \cdashline{1-6}
Expert               & 2.83±0.01 & 174±9 & 3728±447 & 2723±275 & 2331±725
\end{tabular}
}
\caption{Average scores and their standard deviation, for random and expert policies on the base environments. Experts for base, demo and test environments for a given task were trained with the same hyperparameters for the same number of iterations.\label{tab:random_expert_levels}}
\end{table}
\begin{figure}[t]

\centering
        
    \begin{subfigure}[b]{0.32\textwidth}
         \centering
         \begin{tikzpicture}
            \draw (0, 0) node[inner sep=0] {
            \includegraphics[width=0.45\textwidth]{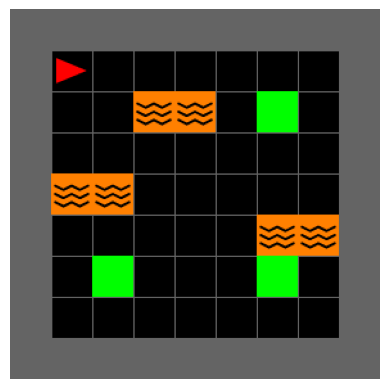}};
            \draw (0, 1.3) node {\tiny Chosen Mazes (ED)};
            \draw (-1.3, 0) node {\tiny 1};
        \end{tikzpicture}
         \begin{tikzpicture}
            \draw (0, 0) node[inner sep=0] {
            \includegraphics[width=0.45\textwidth]{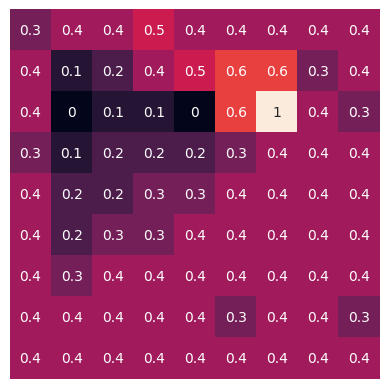}};
            \draw (0, 1.3) node {\tiny Estimated Rewards};
        \end{tikzpicture} \\
         \begin{tikzpicture}
            \draw (0, 0) node[inner sep=0] {\includegraphics[height=0.45\textwidth]{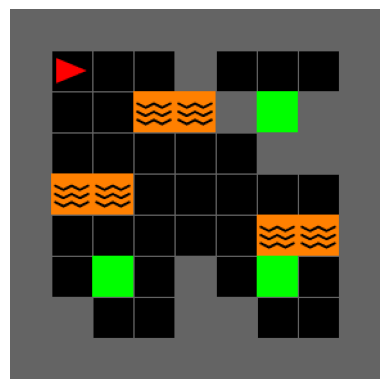}};
            \draw (-1.3, 0) node {\tiny 2};
         \end{tikzpicture}
         \includegraphics[height=0.45\textwidth]{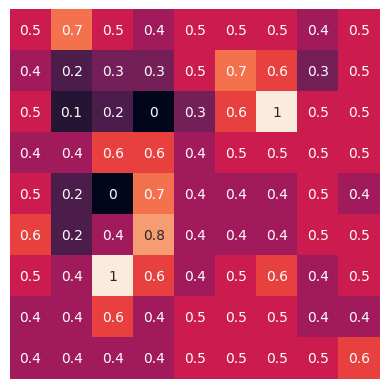}\\
         \begin{tikzpicture}
            \draw (0, 0) node[inner sep=0] {\includegraphics[height=0.45\textwidth]{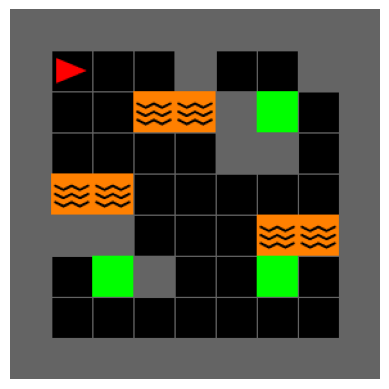}};
            \draw (-1.3, 0) node {\tiny 3};
         \end{tikzpicture}
         \includegraphics[height=0.45\textwidth]{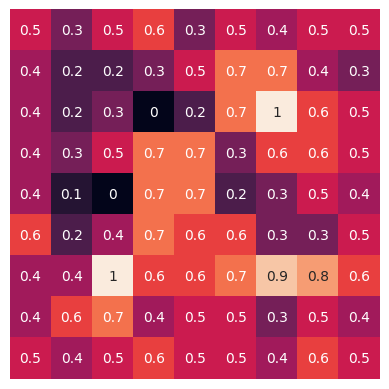}\\
         \begin{tikzpicture}
            \draw (0, 0) node[inner sep=0] {\includegraphics[height=0.45\textwidth]{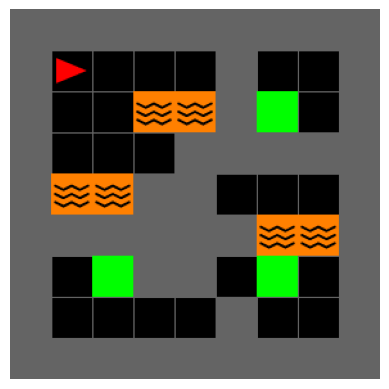}};
            \draw (-1.3, 0) node {\tiny 4};
         \end{tikzpicture}
         \includegraphics[height=0.45\textwidth]{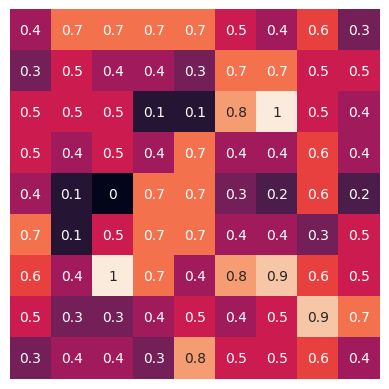}
         \caption{Run 1.}
                           \label{subfig:4_steps_ed_run1}
     \end{subfigure}
           \hfill
        \begin{subfigure}[b]{0.32\textwidth}
         \centering
         \begin{tikzpicture}
            \draw (0, 0) node[inner sep=0] {
            \includegraphics[width=0.45\textwidth]{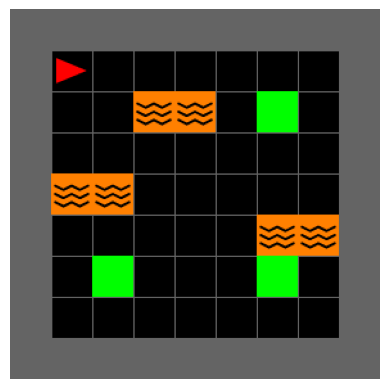}};
            \draw (0, 1.3) node {\tiny Chosen Mazes (ED)};
            \draw (-1.3, 0) node {\tiny 1};
        \end{tikzpicture}
         \begin{tikzpicture}
            \draw (0, 0) node[inner sep=0] {
            \includegraphics[width=0.45\textwidth]{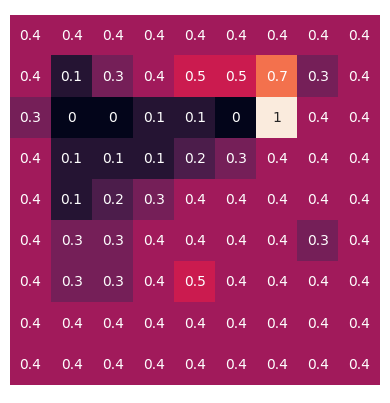}};
            \draw (0, 1.3) node {\tiny Estimated Rewards};
        \end{tikzpicture} \\
         \begin{tikzpicture}
            \draw (0, 0) node[inner sep=0] {\includegraphics[height=0.45\textwidth]{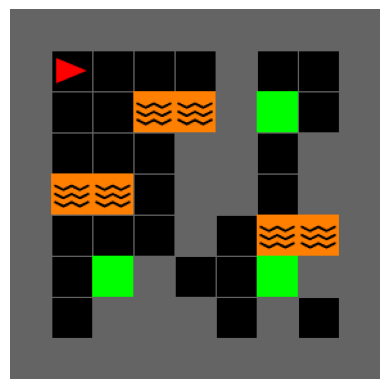}};
            \draw (-1.3, 0) node {\tiny 2};
         \end{tikzpicture}
         \includegraphics[height=0.45\textwidth]{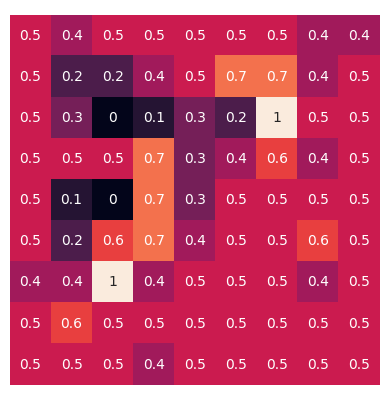}\\
         \begin{tikzpicture}
            \draw (0, 0) node[inner sep=0] {\includegraphics[height=0.45\textwidth]{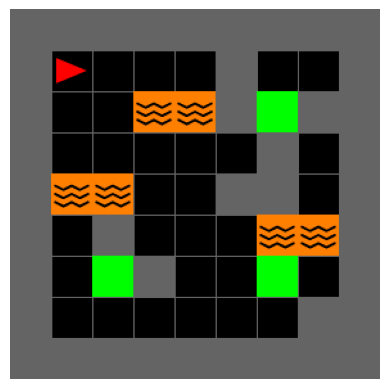}};
            \draw (-1.3, 0) node {\tiny 3};
         \end{tikzpicture}
         \includegraphics[height=0.45\textwidth]{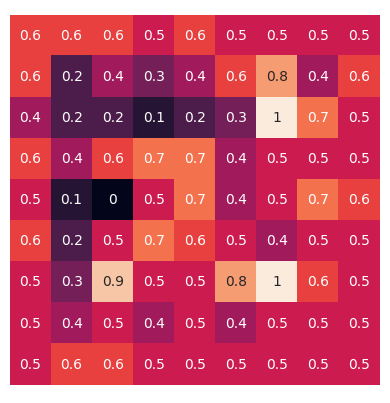}\\
         \begin{tikzpicture}
            \draw (0, 0) node[inner sep=0] {\includegraphics[height=0.45\textwidth]{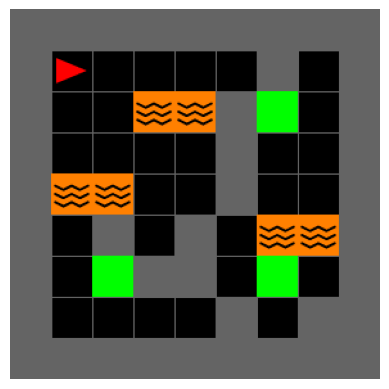}};
            \draw (-1.3, 0) node {\tiny 4};
         \end{tikzpicture}
         \includegraphics[height=0.45\textwidth]{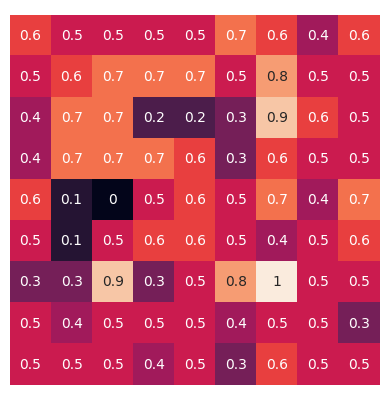}
         \caption{Run 2.}
                           \label{subfig:4_steps_ed_run2}
     \end{subfigure}
           \hfill
         \begin{subfigure}[b]{0.32\textwidth}
         \centering
         \begin{tikzpicture}
            \draw (0, 0) node[inner sep=0] {
            \includegraphics[width=0.45\textwidth]{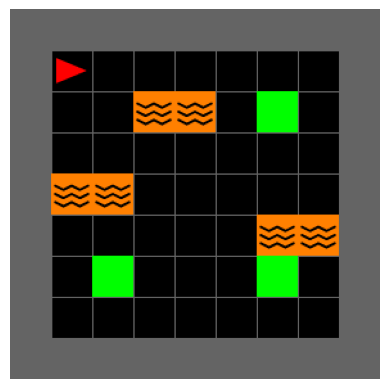}};
            \draw (0, 1.3) node {\tiny Chosen Mazes (ED)};
            \draw (-1.3, 0) node {\tiny 1};
        \end{tikzpicture}
         \begin{tikzpicture}
            \draw (0, 0) node[inner sep=0] {
            \includegraphics[width=0.45\textwidth]{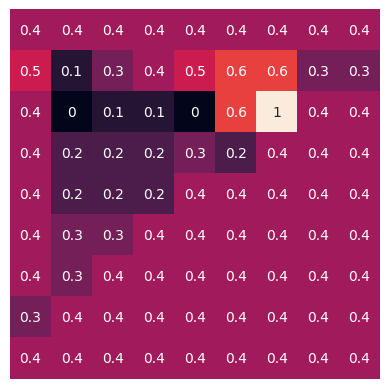}};
            \draw (0, 1.3) node {\tiny Estimated Rewards};
        \end{tikzpicture} \\
         \begin{tikzpicture}
            \draw (0, 0) node[inner sep=0] {\includegraphics[height=0.45\textwidth]{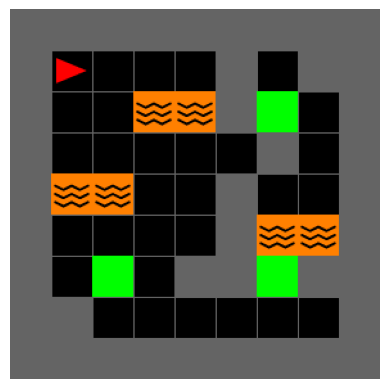}};
            \draw (-1.3, 0) node {\tiny 2};
         \end{tikzpicture}
         \includegraphics[height=0.45\textwidth]{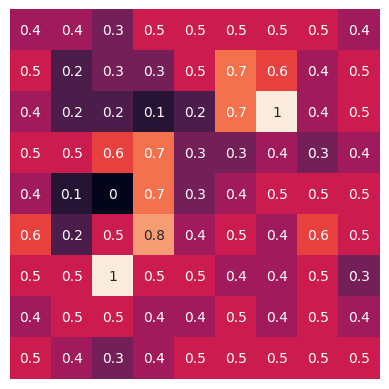}\\
         \begin{tikzpicture}
            \draw (0, 0) node[inner sep=0] {\includegraphics[height=0.45\textwidth]{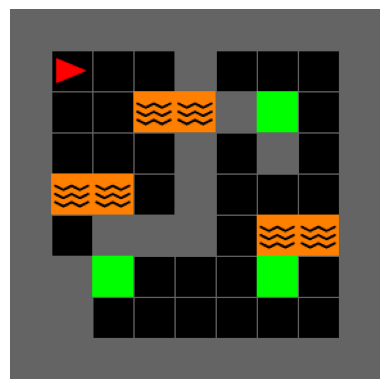}};
            \draw (-1.3, 0) node {\tiny 3};
         \end{tikzpicture}
         \includegraphics[height=0.45\textwidth]{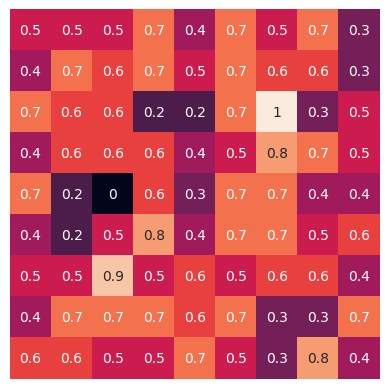}\\
         \begin{tikzpicture}
            \draw (0, 0) node[inner sep=0] {\includegraphics[height=0.45\textwidth]{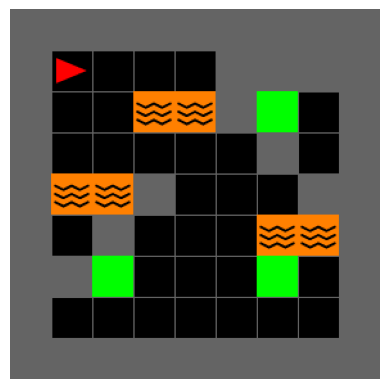}};
            \draw (-1.3, 0) node {\tiny 4};
         \end{tikzpicture}
         \includegraphics[height=0.45\textwidth]{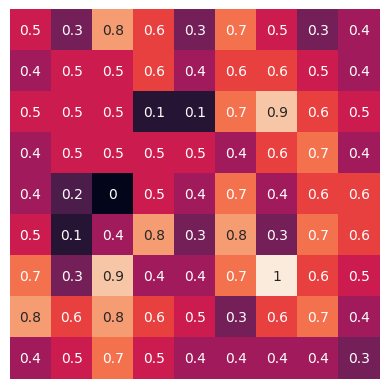}
         \caption{Run 3.}
                           \label{subfig:4_steps_ed_run3}
     \end{subfigure}
     \caption{Selected mazes and reward estimates of \texttt{ED-AIRL} over the course of 4 environment selections (two trajectories per environment), for 3 separate runs.} 
     \label{fig:exp_maze_d-f}
     \vspace{-0.5cm}
\end{figure}

\begin{figure}[t]
\begin{minipage}{0.9\columnwidth}
    
\centering
        
         \centering
         \begin{tikzpicture}
            \draw (0, 0) node[inner sep=0] {
            \includegraphics[width=0.14\textwidth]{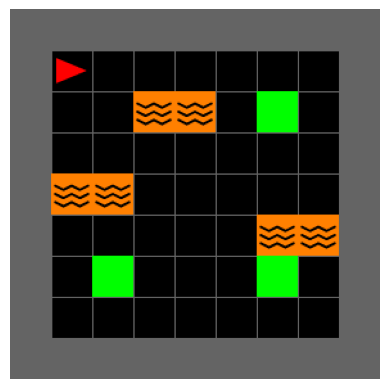}};
            \draw (0, 1.3) node {\tiny Sampled mazes (DR)};
            \draw (-1.3, 0) node {\tiny 1};
        \end{tikzpicture}
        \begin{tikzpicture}
            \draw (0, 0) node[inner sep=0] {
            \includegraphics[width=0.14\textwidth]{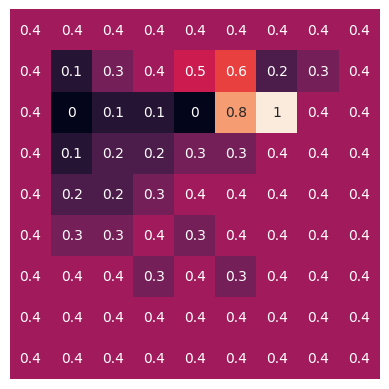}};
            \draw (0, 1.3) node {\tiny DR-BIRL};
        \end{tikzpicture}
        \begin{tikzpicture}
            \draw (0, 0) node[inner sep=0] {
            \includegraphics[width=0.14\textwidth]{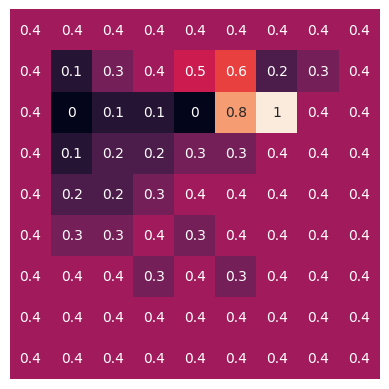}};
            \draw (0, 1.3) node {\tiny Coordinate-wise avg};
        \end{tikzpicture}
        \begin{tikzpicture}
            \draw (0, 0) node[inner sep=0] {
            \includegraphics[width=0.14\textwidth]{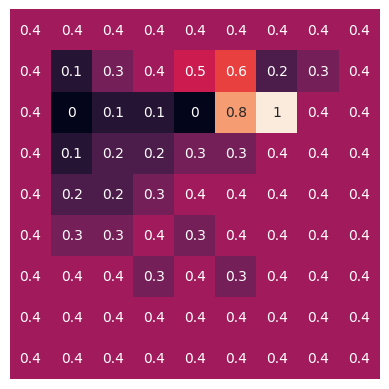}};
            \draw (0, 1.3) node {\tiny Coordinate-wise max};
        \end{tikzpicture}
        \\
         \begin{tikzpicture}
            \draw (0, 0) node[inner sep=0] {\includegraphics[width=0.14\textwidth]{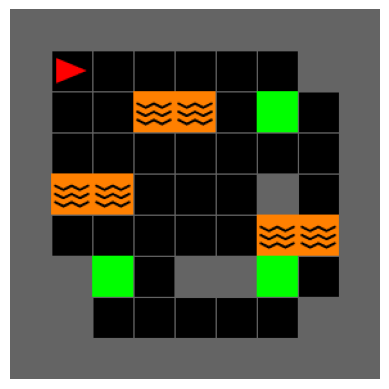}};
            \draw (-1.3, 0) node {\tiny 2};
         \end{tikzpicture}
          \includegraphics[height=0.14\textwidth]{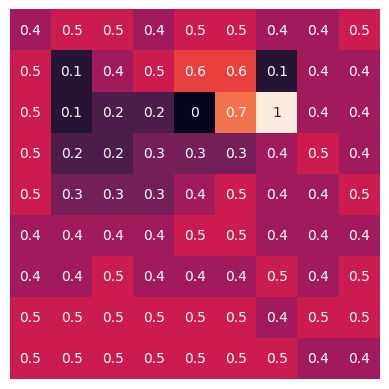} 
           \includegraphics[height=0.14\textwidth]{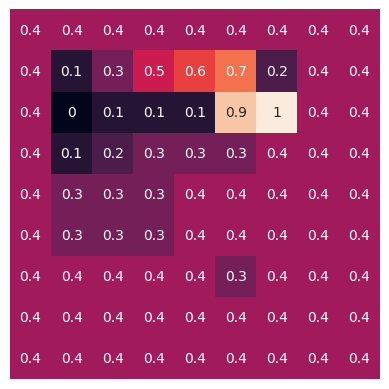} 
            \includegraphics[height=0.14\textwidth]{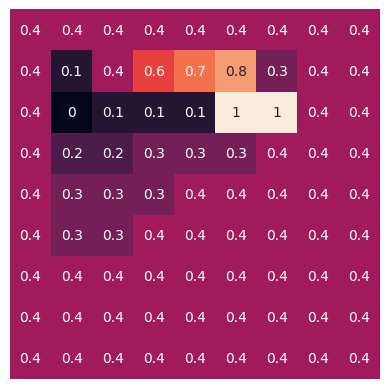} 
        \\
         \begin{tikzpicture}
            \draw (0, 0) node[inner sep=0] {\includegraphics[width=0.14\textwidth]{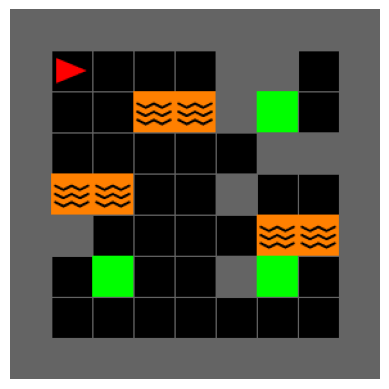}};
            \draw (-1.3, 0) node {\tiny 3};
         \end{tikzpicture}
         \includegraphics[height=0.14\textwidth]{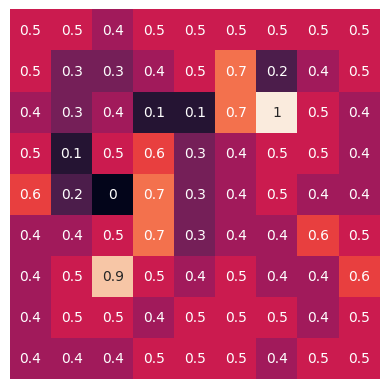} 
           \includegraphics[height=0.14\textwidth]{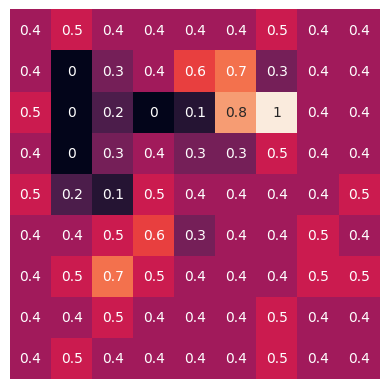} 
            \includegraphics[height=0.14\textwidth]{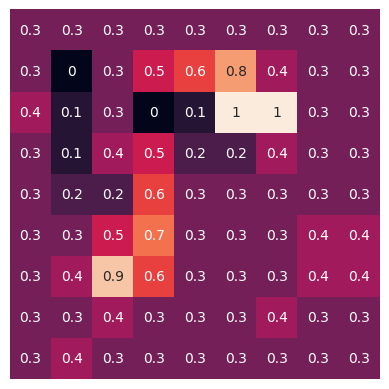}
        \\
         \begin{tikzpicture}
            \draw (0, 0) node[inner sep=0] {\includegraphics[width=0.14\textwidth]{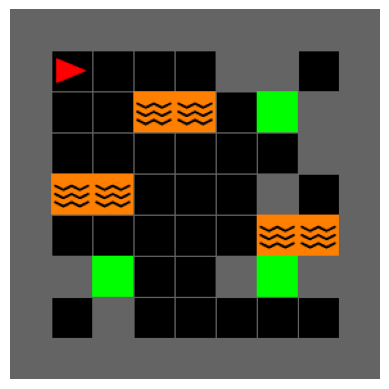}};
            \draw (-1.3, 0) node {\tiny 4};
         \end{tikzpicture}
         \includegraphics[height=0.14\textwidth]{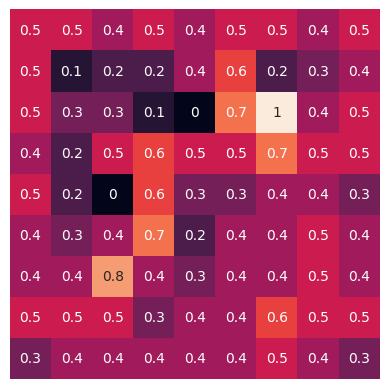} 
           \includegraphics[height=0.14\textwidth]{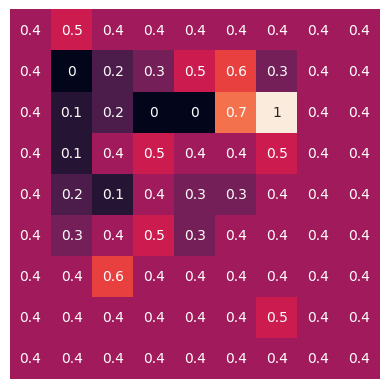} 
            \includegraphics[height=0.14\textwidth]{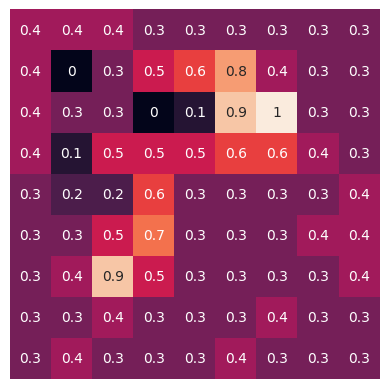} 
        \\
         \begin{tikzpicture}
            \draw (0, 0) node[inner sep=0] {\includegraphics[width=0.14\textwidth]{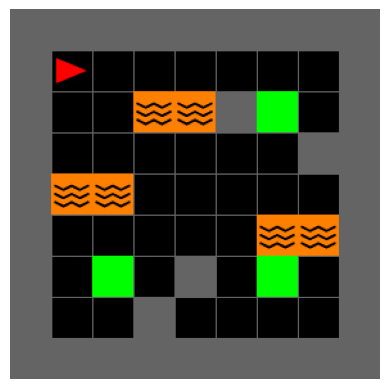}};
            \draw (-1.3, 0) node {\tiny 5};
         \end{tikzpicture}
         \includegraphics[height=0.14\textwidth]{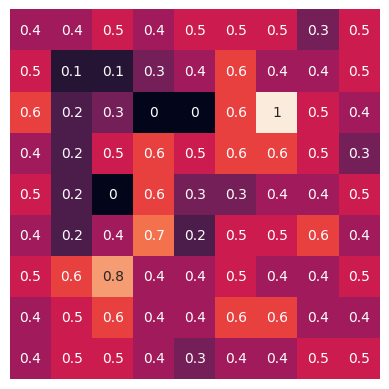} 
           \includegraphics[height=0.14\textwidth]{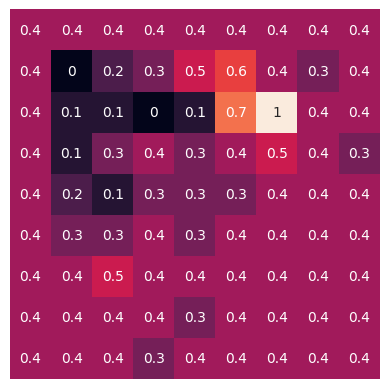} 
            \includegraphics[height=0.14\textwidth]{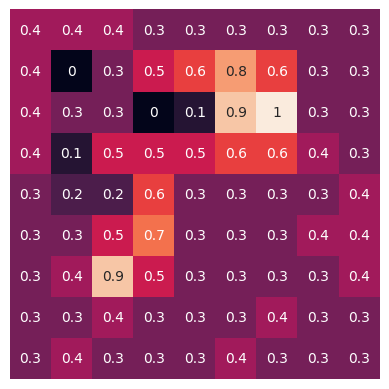}
        \\
         \begin{tikzpicture}
            \draw (0, 0) node[inner sep=0] {\includegraphics[width=0.14\textwidth]{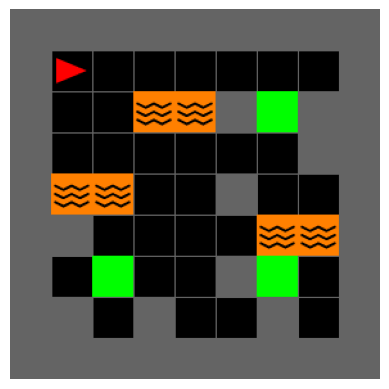}};
            \draw (-1.3, 0) node {\tiny 6};
         \end{tikzpicture}
         \includegraphics[height=0.14\textwidth]{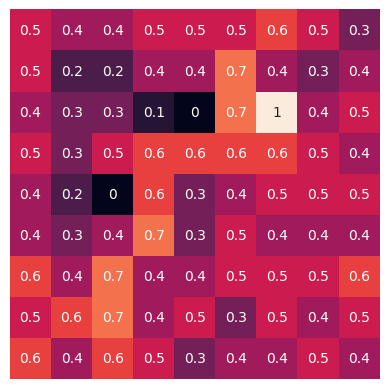} 
           \includegraphics[height=0.14\textwidth]{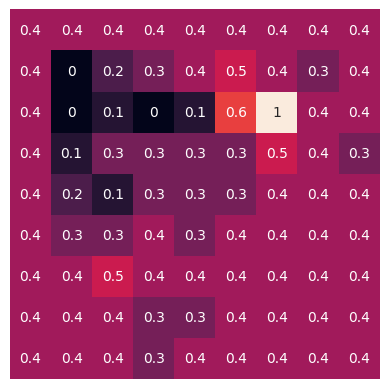} 
            \includegraphics[height=0.14\textwidth]{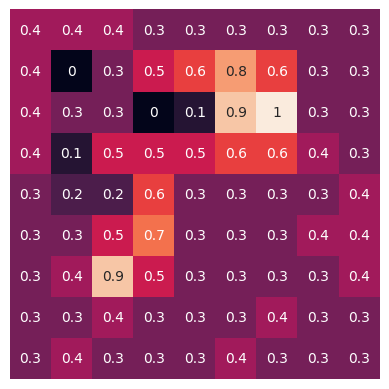} 
        \\
         \begin{tikzpicture}
            \draw (0, 0) node[inner sep=0] {\includegraphics[width=0.14\textwidth]{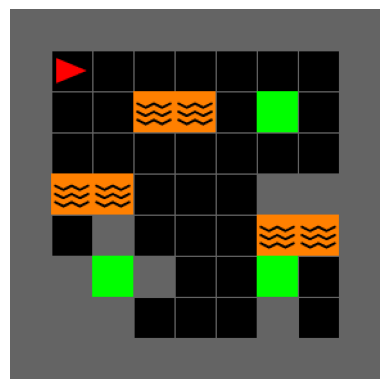}};
            \draw (-1.3, 0) node {\tiny 7};
         \end{tikzpicture}
         \includegraphics[height=0.14\textwidth]{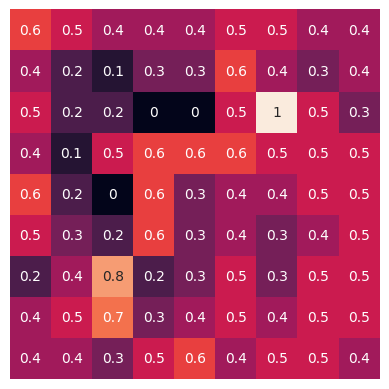} 
           \includegraphics[height=0.14\textwidth]{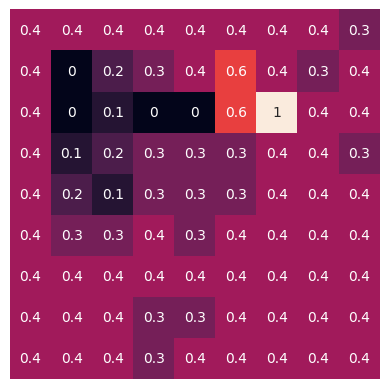} 
            \includegraphics[height=0.14\textwidth]{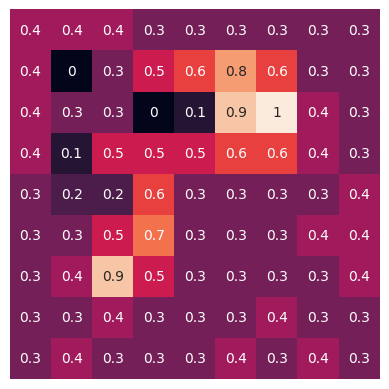} 
        \\
         \begin{tikzpicture}
            \draw (0, 0) node[inner sep=0] {\includegraphics[width=0.14\textwidth]{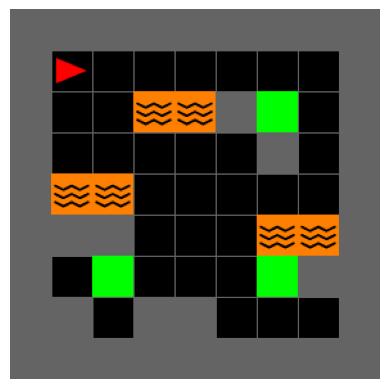}};
            \draw (-1.3, 0) node {\tiny 8};
         \end{tikzpicture}
         \includegraphics[height=0.14\textwidth]{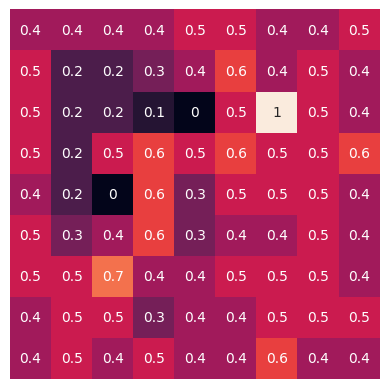} 
           \includegraphics[height=0.14\textwidth]{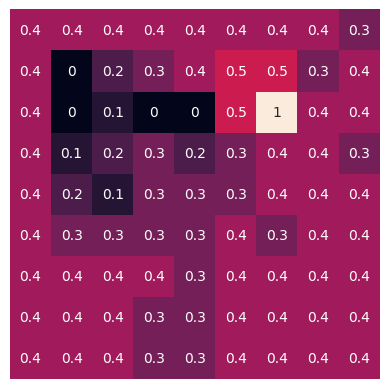} 
            \includegraphics[height=0.14\textwidth]{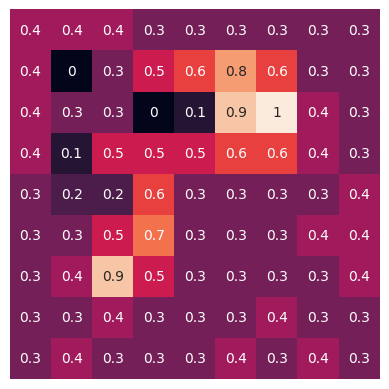}  
            
     \caption{Selected mazes and reward estimates of \drbirl~over the course of 8 environment selections (with two trajectories per environment). We here restrict the set of environments to those that do not block off the expert, i.e., no obstacles can be placed in the top left 4x4 squares. We additionally plot the estimated rewards when running \birl~in each environment separately and taking a coordinate-wise max or coordinate-wise average.} 
     \label{fig:dr_additional_experiment_seed_0}
     
\end{minipage}
\end{figure}

\begin{table}[th!]
\setlength{\tabcolsep}{6pt} 
\renewcommand{\arraystretch}{1.25} 
\centering
\resizebox{\textwidth}{!}{
\begin{tabular}{L{0.14\textwidth}|C{0.14\textwidth}|C{0.14\textwidth}|C{0.14\textwidth}|C{0.14\textwidth}|C{0.14\textwidth}|C{0.14\textwidth}}
\multicolumn{1}{C{0.14\textwidth}|}{Continuous} & \multicolumn{1}{C{0.14\textwidth}}{} &
\multicolumn{1}{C{0.14\textwidth}}{Demo} &
\multicolumn{1}{C{0.14\textwidth}|}{} &
\multicolumn{1}{C{0.14\textwidth}}{} &
\multicolumn{1}{C{0.14\textwidth}}{Test} &
\multicolumn{1}{C{0.14\textwidth}}{} \\ \cline{2-7}
\multicolumn{1}{C{0.14\textwidth}|}{Maze} &
  \multicolumn{1}{C{0.14\textwidth}|}{Average} &
  \multicolumn{1}{C{0.14\textwidth}|}{25\%-quantile} &
  \multicolumn{1}{C{0.14\textwidth}|}{75\%-quantile} &
  \multicolumn{1}{C{0.14\textwidth}|}{Average} &
  \multicolumn{1}{C{0.14\textwidth}|}{25\%-quantile} &
  \multicolumn{1}{C{0.14\textwidth}}{75\%-quantile} \\ \midrule
$\edairl$                              & \textbf{68±04} & \textbf{55±05} & \textbf{97±00} & \textbf{71±02} & \textbf{52±06} & \textbf{99±00}  \\ 
$\drairl$                 & 52±07  &  29±11  & 90±04   & 53±12  & 30±15  & 81±17  \\ \cdashline{1-7}
$\airl$                      & 33±09 & 08±11 & 69±09 & 52±07 & 30±09 & 76±09  \\
$\rime$                                 & -105±12 & -115±05 & -45±02 & -52±03 & -57±03 & -39±02  \\ 
$\gail$                                 & 20±05 & -10±06 & 79±06 & 17±01 & -19±02 & 47±13  \\ 
$\bc$                                   & 11±00 & -23±00 & 71±01 & 22±00 & -18±00 & 49±00   \\
\end{tabular}
}
\newline
\vspace*{15px}
\newline
\resizebox{\textwidth}{!}{
\begin{tabular}{L{0.14\textwidth}|C{0.14\textwidth}|C{0.14\textwidth}|C{0.14\textwidth}|C{0.14\textwidth}|C{0.14\textwidth}|C{0.14\textwidth}}
\multicolumn{1}{C{0.14\textwidth}|}{Hopper} & \multicolumn{1}{C{0.14\textwidth}}{} &
\multicolumn{1}{C{0.14\textwidth}}{Demo} &
\multicolumn{1}{C{0.14\textwidth}|}{} &
\multicolumn{1}{C{0.14\textwidth}}{} &
\multicolumn{1}{C{0.14\textwidth}}{Test} &
\multicolumn{1}{C{0.14\textwidth}}{} \\  \cline{2-7}
\multicolumn{1}{C{0.14\textwidth}|}{} &
  \multicolumn{1}{C{0.14\textwidth}|}{Average} &
  \multicolumn{1}{C{0.14\textwidth}|}{25\%-quantile} &
  \multicolumn{1}{C{0.14\textwidth}|}{75\%-quantile} &
  \multicolumn{1}{C{0.14\textwidth}|}{Average} &
  \multicolumn{1}{C{0.14\textwidth}|}{25\%-quantile} &
  \multicolumn{1}{C{0.14\textwidth}}{75\%-quantile} \\ \midrule
$\edairl$                              & \textbf{63±07} & 37±05 & 76±08 & 52±04 & 31±03 & \textbf{69±08}  \\ 
$\drairl$                 & 59±06 & 35±04 & 69±08 & \textbf{56±04} & 37±04 & 60±06  \\ \cdashline{1-7}
$\airl$                      & 38±03 & 20±02 & 51±05 & 34±04 & 19±02 & 44±07  \\
$\rime$                                 & 61±01 & \textbf{47±01} & \textbf{80±03} & 53±02 & \textbf{42±02} & 66±02   \\ 
$\gail$                                 & 40±02 & 26±01 & 44±03 & 34±01 & 27±01 & 40±01   \\ 
$\bc$                                   & 32±01 & 23±01 & 33±01 & 27±01 & 22±00 & 30±01  \\

\end{tabular}
}
\newline
\vspace*{15px}
\newline

\resizebox{\textwidth}{!}{
\begin{tabular}{L{0.14\textwidth}|C{0.14\textwidth}|C{0.14\textwidth}|C{0.14\textwidth}|C{0.14\textwidth}|C{0.14\textwidth}|C{0.14\textwidth}}
\multicolumn{1}{C{0.14\textwidth}|}{HalfCheetah} & \multicolumn{1}{C{0.14\textwidth}}{} &
\multicolumn{1}{C{0.14\textwidth}}{Demo} &
\multicolumn{1}{C{0.14\textwidth}|}{} &
\multicolumn{1}{C{0.14\textwidth}}{} &
\multicolumn{1}{C{0.14\textwidth}}{Test} &
\multicolumn{1}{C{0.14\textwidth}}{} \\ \cline{2-7}
\multicolumn{1}{C{0.14\textwidth}|}{} &
  \multicolumn{1}{C{0.14\textwidth}|}{Average} &
  \multicolumn{1}{C{0.14\textwidth}|}{25\%-quantile} &
  \multicolumn{1}{C{0.14\textwidth}|}{75\%-quantile} &
  \multicolumn{1}{C{0.14\textwidth}|}{Average} &
  \multicolumn{1}{C{0.14\textwidth}|}{25\%-quantile} &
  \multicolumn{1}{C{0.14\textwidth}}{75\%-quantile} \\ \midrule
$\edairl$                              & \textbf{40±11} & \textbf{17±10} & 61±14 & 35±13 & 20±13 & 46±16  \\ 
$\drairl$                 & \textbf{40±13} & 16±11 & \textbf{63±14} & \textbf{40±11} & \textbf{23±10} & \textbf{54±12}  \\ \cdashline{1-7}
$\airl$                      & 29±09 & 06±12 & 45±09 & 16±07 & 00±07 & 25±08  \\
$\rime$                                 & -21±08 & -29±05 & -12±10 & -10±09 & -11±08 & -06±09   \\  
$\gail$                                 & -12±02 & -28±02 & 01±02 & -06±01 & -14±01 & -06±01   \\  
$\bc$                                   & -23±01 & -30±01 & -16±01 & -14±01 & -17±01 & -12±02    \\
\end{tabular}
}
\newline
\vspace*{15px}
\newline
\resizebox{\textwidth}{!}{
\begin{tabular}{L{0.14\textwidth}|C{0.14\textwidth}|C{0.14\textwidth}|C{0.14\textwidth}|C{0.14\textwidth}|C{0.14\textwidth}|C{0.14\textwidth}}
\multicolumn{1}{C{0.14\textwidth}|}{Swimmer} & \multicolumn{1}{C{0.14\textwidth}}{} &
\multicolumn{1}{C{0.14\textwidth}}{Demo} &
\multicolumn{1}{C{0.14\textwidth}|}{} &
\multicolumn{1}{C{0.14\textwidth}}{} &
\multicolumn{1}{C{0.14\textwidth}}{Test} &
\multicolumn{1}{C{0.14\textwidth}}{} \\ \cline{2-7}
\multicolumn{1}{C{0.14\textwidth}|}{} &
  \multicolumn{1}{C{0.14\textwidth}|}{Average} &
  \multicolumn{1}{C{0.14\textwidth}|}{25\%-quantile} &
  \multicolumn{1}{C{0.14\textwidth}|}{75\%-quantile} &
  \multicolumn{1}{C{0.14\textwidth}|}{Average} &
  \multicolumn{1}{C{0.14\textwidth}|}{25\%-quantile} &
  \multicolumn{1}{C{0.14\textwidth}}{75\%-quantile} \\ \midrule
$\edairl$                             & 80±16 & 51±15 & 110±20 & 69±12 & 53±11 & 90±15  \\ 
$\drairl$               & 45±04 & 22±04 & 62±06 & 53±05 & 36±05 & 71±07  \\ \cdashline{1-7}
$\airl$                      & 40±10 & 11±07 & 66±13 & 44±08 & 19±08 & 68±10  \\ 
$\rime$                                & -05±01 & -08±01 & -03±01 & -04±01 & -07±01 & -02±00 \\ 
$\gail$                                 & 111±00 & 86±01 & 132±00 & 110±01 & 108±00 & 121±00   \\ 
$\bc$                                   & \textbf{124±01} & \textbf{100±01} & \textbf{156±00} & \textbf{130±01} & \textbf{116±00} & \textbf{160±00} \\
\end{tabular}
}
\newline
\vspace*{15px}
\newline

\resizebox{\textwidth}{!}{
\begin{tabular}{L{0.14\textwidth}|C{0.14\textwidth}|C{0.14\textwidth}|C{0.14\textwidth}|C{0.14\textwidth}|C{0.14\textwidth}|C{0.14\textwidth}}
\multicolumn{1}{C{0.14\textwidth}|}{Ant} & \multicolumn{1}{C{0.14\textwidth}}{} &
\multicolumn{1}{C{0.14\textwidth}}{Demo} &
\multicolumn{1}{C{0.14\textwidth}|}{} &
\multicolumn{1}{C{0.14\textwidth}}{} &
\multicolumn{1}{C{0.14\textwidth}}{Test} &
\multicolumn{1}{C{0.14\textwidth}}{} \\ \cline{2-7}
\multicolumn{1}{C{0.14\textwidth}|}{} &
  \multicolumn{1}{C{0.14\textwidth}|}{Average} &
  \multicolumn{1}{C{0.14\textwidth}|}{25\%-quantile} &
  \multicolumn{1}{C{0.14\textwidth}|}{75\%-quantile} &
  \multicolumn{1}{C{0.14\textwidth}|}{Average} &
  \multicolumn{1}{C{0.14\textwidth}|}{25\%-quantile} &
  \multicolumn{1}{C{0.14\textwidth}}{75\%-quantile} \\ \midrule
$\edairl$                              & -71±03 & -79±03 & -46±03 & -86±05 & -98±05 & -55±04  \\ 
$\drairl$                 & -73±03 & -88±03 & -47±03 & -95±03 & -114±05 & -61±03  \\\cdashline{1-7}
$\airl$                      & -73±03 & -89±05 & -47±03 & -97±05 & -113±08 & -62±03  \\ 
$\rime$                                & -43±09 & -64±11 & -23±06 & -45±08 & -59±10 & -27±05 \\ 
$\gail$                                & \textbf{69±03} & 45±03 & \textbf{89±05} & \textbf{75±04} & 44±03 & \textbf{96±04} \\ 
$\bc$                                   & 62±01 & \textbf{49±01} & 72±01 & 69±01 & \textbf{47±02} & 84±02  \\
\end{tabular}
}
\newline
\caption{Normalised scores for the continuous experiments. The quantiles are calculated by taking the average of the quantiles of each individual run.}
\label{tab:detailed_tables_continuous}
\end{table}

\begin{figure}[th!]
     \centering
         \centering
         \includegraphics[width=0.84\textwidth]{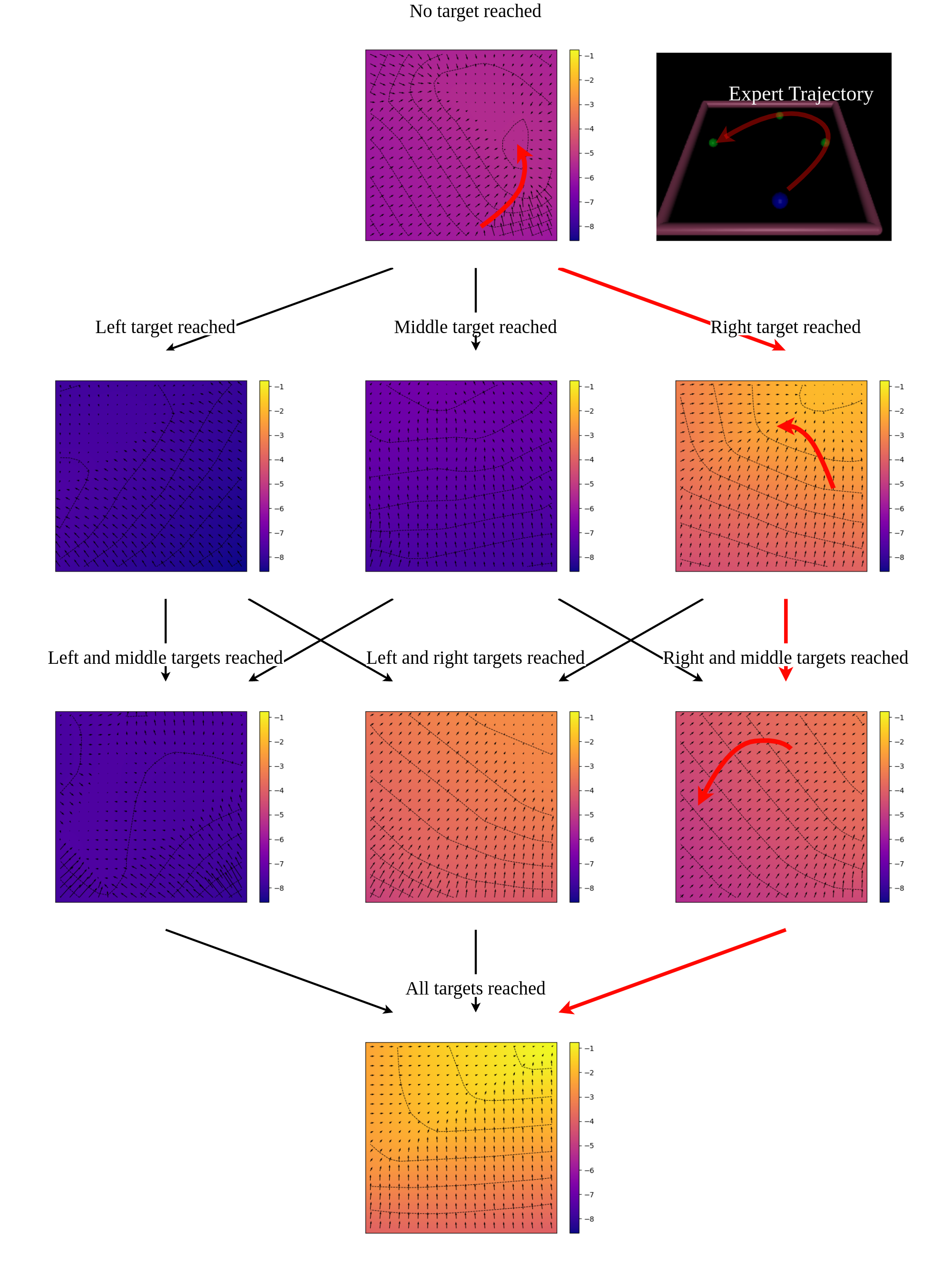} 
        \caption{Reward function learned by $\airl$ on the continuous maze task. The reward function assigns large reward to the trajectories that gather targets in the same order as the expert, i.e., right-middle-left, which corresponds to the right side of the tree. It does not capture that the order in which the green states are reached does not matter. Hence, the learned reward function "overfits" to the expert trajectory. This function may yield undesired behaviors if the agent is forced to gather targets in a different order, or if a target is blocked off. For instance, if the right target is inaccessible, the reward function penalizes the agent for collecting the left target and/or the middle target.\label{fig:heatmaptree_airl}}
        \vspace{23px}
\end{figure}

\begin{figure}[th!]
     \centering
         \centering
         \includegraphics[width=0.84\textwidth]{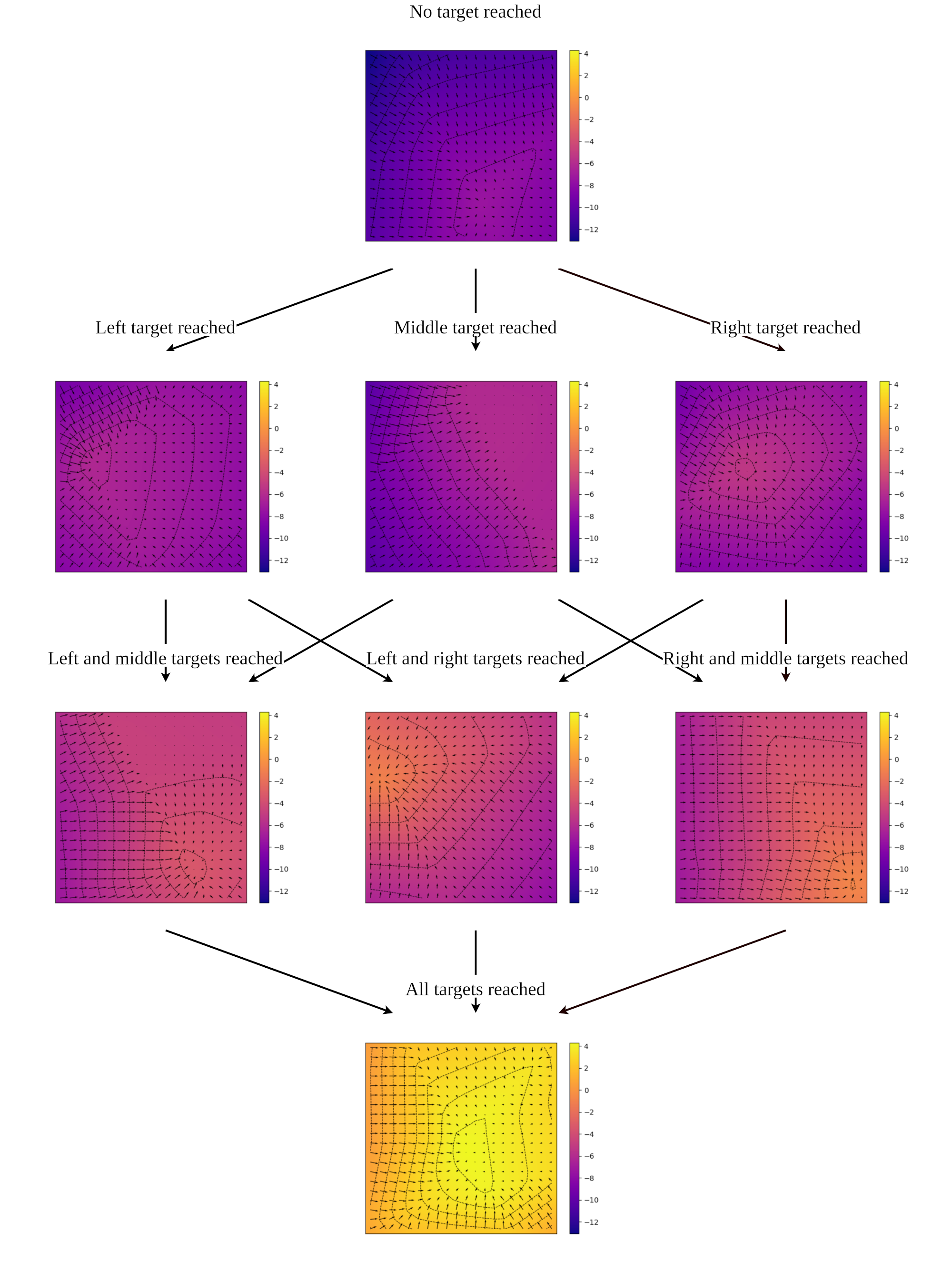} 
        \caption{Reward function learned by $\edairl$ on the continuous maze task. The reward function does not favor a path more than another. It furthermore encourages the agent to collect targets even if some are blocked off.\label{fig:heatmaptree_edairl}}
\vspace{70px}
\end{figure}





\section{Theoretical Questions:}


We can define the \emph{reward loss} of $\r$ on $\trans$ as $ \mathscr{L}_\trans (\r) = \val^*_{\truereward, \trans} - \val_{\truereward, \trans}^{\pi_{\r, \trans}}$. This is equal to $\ell_{\truereward} (\trans, \pi_{\r, \trans})$. This can be understood as the loss of optimising for rewards $\r$ instead of the true rewards $\truereward$ on $\trans$.  

\begin{lemma}[Generalisability of Optimal Rewards]
Suppose that $\r$ is optimal for $\trans_1$ and $\trans_2$. For $\trans \equiv w \trans_1 + (1-w) \trans_2$, we have 
\begin{equation*}
    \mathscr{L}_T(\r) \leq ...
\end{equation*}

\end{lemma}

Claim: If the stationary state distribution w.r.t. $(\pi_1, T_1)$ and $(\pi_2, T_2)$ have the same support, then the stationary state distribution of $(\pi_{R, T}, T_w)$ has the same support as well.

\begin{lemma}[Generalisability of Optimal Rewards with Optimal Demonstrator]
Suppose that $\r$ is optimal for $\trans_1$ and $\trans_2$ \emph{and} that we observed the optimal policies $\pi_{\truereward, \trans_1}$ and $\pi_{\truereward, \trans_2}$. For $\trans \equiv w \trans_1 + (1-w) \trans_2$, we get a slightly refined bound of 
\begin{equation*}
    \mathscr{L}_T(\r) \leq ...
\end{equation*}
\end{lemma}

\subsection{Rewards are generally not transferable}
\begin{lemma}[Lower Bound on Generalisability]
There exist $(\S, \A, \trans, \truereward, \gamma)$ such that a reward function $\r$ induces an optimal policy w.r.t.\ $\trans$, but there exists $\trans_\varepsilon$ with $\lVert \trans - \trans_{\varepsilon} \rVert \leq \varepsilon$ such that  
$$\mathscr{L}_{\trans_{\varepsilon}}(\r) \geq \frac{\max_s \truereward(s)}{1-\gamma} ...$$
\end{lemma}
----


\section*{Notation}

\begin{itemize}
    \item $\pi_{\r, \trans}$ --- \emph{optimal} policy in MDP $(\S, \A, \trans, \r, \gamma)$
    \item $\val^\pi_{\r, \trans}$ --- value function under initial state distribution $\omega$ and $\pi, \r, \trans$
    \item $\truereward$ --- true reward function
    \item $\r$ --- generic reward function 
\end{itemize}

\newpage

\begin{enumerate}

    \item Can we show something w.r.t.\ the robustness of rewards learned from a single environment? 
    
    Something of the lines of: Minor deviations in the environment dynamics can make the reward function break. 
    
    Similarly, can we show that a demonstration in another environment can prevent this breaking? 
    
    \item Can we show something addressing the sample-efficiency of learning from a single env vs multiple env?
    
\end{enumerate}

\subsection{Limitations of Learning from Single Environment}

(IRL and the identifiability of rewards \cite{kim2021reward, cao2021identifiability}) 

\subsection{Power of Learning from Multiple Environments} 

Check \cite{cao2021identifiability} for their result on demonstrations with different discount factors.

Q: If a reward function is optimal on $\trans_1$ and $\trans_2$, what is the guarantee on mixtures $w \trans_1 + (1-w) \trans_2$?

Based on \cite{viano2021robust}, we possibly get (actually not what we are interested in)
$$| \val_\trans - \val_{\trans'}| \leq \frac{\gamma}{1-\gamma} \min_{\trans \in \Trans} \textrm{dist}(\trans, \trans').$$ 

For some $\r$, suppose that $\pi^*_{\r, \trans}$ is optimal $\forall \trans \in \Trans$. For some other transition function $\trans'$ we are interested in the quantity $|\val_{\truereward, \trans'}^* - \val_{\truereward, \trans'}^{\pi^*_{\r, \trans'}}|$. 
Do we have $\val^{\pi_{\r, \trans'}}_{\truereward, \trans'} \geq \val^{\pi_{\r, \trans}}_{\truereward, \trans'}$ for all $\trans \in \Trans$? In general, probably no. However, when $\trans'$ is a mixture, probably yes? 

We want a statement like: "If $\r$ is optimal on this set of MDPs, then we know that on its mixtures $\r$ is guaranteed to be at least this good." 

\subsection{Hypotheses}
It will be interesting to consider the following hypotheses: 
\begin{itemize}
    \item[a)] \emph{The more influence the expert has on transitions, the more informative the demonstrations. }
    
    \item[b)] \emph{Environments that are difficult to solve (while being solvable) are more informative. }  
    
    \item[c)] \emph{A reward function learned from a sequence of designed environments generalises better. }
    
\end{itemize}
a) In what sense more informative?

c) That is, not only can we be more certain about the reward function, but the learned reward function yields robust policies that transfer well to all environments in $\Param$. Thus, Curriculum Learning for IRL could also yield more robust reward functions (in the sense of transferability to similar environments). This is generally the goal as we do not simply wish to learn a reward which allows us to imitate agents in the current environment, but which allows us to predict their actions in other settings.

\end{document}